\def\eqref#1{equation~\ref{#1}}
\def\1{\bm{1}}
\def\rd{{\textnormal{d}}}
\DeclareMathAlphabet{\mathsfit}{\encodingdefault}{\sfdefault}{m}{sl}
\SetMathAlphabet{\mathsfit}{bold}{\encodingdefault}{\sfdefault}{bx}{n}
\newcommand{\E}{\mathbb{E}}
\newcommand{\R}{\mathbb{R}}
\DeclareMathOperator{\Tr}{Tr}
\newcommand{\LID}{\texttt{LID}(x)}
\newcommand{\FLIPD}[1]{\texttt{FLIPD}\left(x; #1\right)}
\newcommand{\M}{\mathcal{M}}
\newcommand{\dens}[1]{p(\, \cdot \, , #1)}
\newcommand{\gaussian}[4]{\mathcal{N}_{#1}(#2; #3, #4)}
\newcommand{\uniform}[4]{\mathcal{U}_{#1}(#2; #3, #4)}
\newcommand{\gconst}[1]{C_{#1}^{\mathcal{N}}}
\newcommand{\uconst}[1]{C_{#1}^{\mathcal{U}}}
\newcommand{\gconv}{\varrho_{\mathcal{N}}(x, \delta)}
\newcommand{\uconv}{\varrho_{\mathcal{U}}(x, \delta)}
\newcommand{\gconvnoarg}{\varrho_{\mathcal{N}}(\, \cdot \, , \delta)}
\newcommand{\dist}{\texttt{dist}_\M}
\newcommand{\ball}[3]{B_{#1}(#2, #3)}
\newcommand{\paux}[1]{p_\delta^\mathcal{#1}}
\newtheorem{theorem}{Theorem}
\newtheorem{corollary}{Corollary}
\newtheorem{prop}{Proposition}
\newtheorem{claim}{Claim}
\title{On Convolutions, Intrinsic Dimension, and Diffusion Models}
\author{\name Kin Kwan Leung \email kk@layer6.ai \\
      \addr Layer 6 AI
      \AND
      \name Rasa Hosseinzadeh \email rasa@layer6.ai \\
      \addr Layer 6 AI
      \AND
      \name Gabriel Loaiza-Ganem \email gabriel@layer6.ai\\
      \addr Layer 6 AI
      }
\begin{document}

\maketitle

\begin{abstract}
The manifold hypothesis asserts that data of interest in high-dimensional ambient spaces, such as image data, lies on unknown low-dimensional submanifolds. Diffusion models (DMs) -- which operate by convolving data with progressively larger amounts of Gaussian noise and then learning to revert this process -- have risen to prominence as the most performant generative models, and are known to be able to learn distributions with low-dimensional support. For a given datum in one of these submanifolds, we should thus intuitively expect DMs to have implicitly learned its corresponding local intrinsic dimension (LID), i.e.\ the dimension of the submanifold it belongs to. \citet{kamkari2024geometric2} recently showed that this is indeed the case by linking this LID to the rate of change of the log marginal densities of the DM with respect to the amount of added noise, resulting in an LID estimator known as FLIPD. LID estimators such as FLIPD have a plethora of uses, among others they quantify the complexity of a given datum, and can be used to detect outliers, adversarial examples and AI-generated text. FLIPD achieves state-of-the-art performance at LID estimation, yet its theoretical underpinnings are incomplete since \citet{kamkari2024geometric2} only proved its correctness under the highly unrealistic assumption of affine submanifolds. In this work we bridge this gap by formally proving the correctness of FLIPD under realistic assumptions. Additionally, we show that an analogous result holds when Gaussian convolutions are replaced with uniform ones, and discuss the relevance of this result.
\end{abstract}

\section{Introduction}
The manifold hypothesis \citep{bengio2013representation} states that high-dimensional data in $\R^D$ commonly lies on a low-dimensional submanifold, or a disjoint union thereof \citep{brown2022union}, embedded on $\R^D$; this hypothesis has been empirically found to hold for image data \citep{pope2021intrinsic} and other modalities \citep{cresswell2022caloman}. Given a dataset obeying this hypothesis along with a query datum $x$, one might want to leverage the data to estimate the \emph{local intrinsic dimension} (LID) of the datum -- i.e.\ the dimension of the submanifold it belongs to -- which we denote as $\LID$. At a first glance the problem of LID estimation might seem like a mere statistical or mathematical curiosity, yet LID estimators have found numerous important applications within machine learning: they effectively quantify image complexity \citep{kamkari2024geometric2}, are helpful in the detection of outliers \citep{houle2018correlation, anderberg2024dimensionality, kamkari2024geometric}, AI-generated text \citep{tulchinskii2023intrinsic}, memorization \citep{ross2025geometric} and adversarial examples \citep{ma2018characterizing}, and LID estimates of neural network representations are predictive of generalization \citep{ansuini2019intrinsic, birdal2021intrinsic, brown2022relating, magai2022topology}. However, traditional LID estimators rely on pairwise distances or similar quantities \citep{fukunaga1971algorithm, pettis1979intrinsic, grassberger1983measuring, levina2004maximum, mackay2005comments, johnsson2014low, facco2017estimating, bac2021}, and they thus typically suffer both from poor scaling on dataset size and from the curse of dimensionality. 

Generative models aim to learn the distribution $p$ that gave rise to observed data, and when the manifold hypothesis holds, $p$ has low-dimensional support. Research studying generative models through the lens of the manifold hypothesis has recently proliferated \citep{dai2019diagnosing, zhang2020spread, brehmer2020flows, kim2020softflow, arbel2020generalized, caterini2021rectangular, horvat2021denoising, ross2021tractable, cunningham2022principal, loaiza2022diagnosing, pmlr-v187-loaiza-ganem23a, ross2022neural, sorrenson2024likelihood, loaiza-ganem2024deep, ventura2025manifolds}, and an important finding from this line of research is that diffusion models \citep[DMs;][]{sohl2015deep, ho2020denoising, song2021score} -- which are state-of-the-art generative models -- can successfully learn distributions with low-dimensional support \citep{pidstrigach2022score, debortoli2022convergence}. When a generative model, such as a DM, correctly learns $p$ it must therefore learn the corresponding intrinsic dimensions of its support as well; indeed, various works have shown that LID estimators can be extracted from generative models and that these new estimators avoid the aforementioned pitfalls of traditional ones \citep{tempczyk2022lidl, zheng2022learning, horvat2024gauge, stanczuk2024diffusion, kamkari2024geometric2}.

FLIPD is a state-of-the-art and highly scalable LID estimator based on DMs that was recently proposed by \citet{kamkari2024geometric2}. FLIPD is grounded in a result showing that
\begin{equation}\label{eq:main}
    \LID = D + \lim_{\delta \rightarrow - \infty} \dfrac{\partial}{\partial \delta}  \log \gconv,
\end{equation}
where $\gconvnoarg$ denotes the convolution between $p$ and Gaussian noise with log standard deviation $\delta$. DMs operate by adding Gaussian noise to data and estimating the corresponding score function; the right-hand side of \autoref{eq:main} can be written in terms of this score function, which then enables FLIPD to be computed by plugging-in the score function learned by the DM. However, \citet{kamkari2024geometric2} only proved \autoref{eq:main} in the case where $p$ is supported on an affine submanifold of $\R^D$ -- this is a highly restrictive and unrealistic assumption for typical data of interest such as natural images.

In summary, LID estimation is a relevant problem in machine learning and the current best-performing estimator of LID lacks strong formal justification. In this work we remedy this situation by proving that \autoref{eq:main} actually holds when $p$ is supported on general disjoint unions of submanifolds of $\R^D$, thus fully justifying FLIPD. Additionally, we also prove that an analogous result holds when the convolution is carried out against a uniform distribution on a ball of log radius $\delta$ (rather than a Gaussian with log standard deviation $\delta$).  Although this second result is less practically relevant since DMs do not use uniform noise, it is still of interest since it directly links $\LID$ to the probability assigned by $p$ to a ball around $x$, which is a quantity that naturally appears implicitly and explicitly throughout machine learning \citep[see e.g.][]{silverman1986density, naeem2020reliable, bhattacharjee2023data, kamkari2024geometric}.

\section{Setup and Background}
\subsection{Setup and Notation}

Throughout our work, we will follow the definition of a $d$-dimensional manifold as being locally homeomorphic to $\R^d$ \citep{lee2012smooth}, meaning that the dimension of a manifold does not vary within the manifold. $\M$ will denote, depending on context, either an embedded $d$-dimensional submanifold of $\R^D$, or a countable disjoint union of embedded submanifolds of potentially varying dimensions. We will assume that data of interest is supported on $\M$ and generated according to a probability distribution admitting a density $p:\M \rightarrow \R$.\footnote{Note that $p$ is not a density with respect to the Lebesgue measure on $\R^D$, but rather with respect to the Riemannian measure (or volume form if it exists) on $\M$; see the work of \citet{loaiza-ganem2024deep} for a discussion about this setup.} Writing $\M$ as $\M = \cup_{j} \M_j$ where $\M_j$ is $d_j$-dimensional and $\M_i \cap \M_j = \emptyset$ for $i \neq j$, the local intrinsic dimension of $x \in \M$ is defined as the dimension of the manifold that $x$ belongs to, i.e.\ $\LID = d_i$ if $x \in \M_i$.

We will make heavy use of isotropic Gaussian densities with log standard deviation $\delta \in \R$, i.e.\ with covariance matrix $e^{2\delta}I_D$; for a mean $\mu \in \R^D$, we will denote the corresponding density evaluated at $x\in \R^D$ as
\begin{equation}
    \gaussian{D}{x}{\mu}{\delta} = \gconst{D} e^{-D \delta} \exp \left(-\dfrac{1}{2}\Vert x -\mu \Vert^2 e^{-2\delta}\right), \quad\text{where}\quad \gconst{D} = (2\pi)^{-D/2},
\end{equation}
and where $\Vert \cdot \Vert$ will always refer to the Euclidean norm. We will sometimes find it convenient to write $\gaussian{d}{x}{\mu}{\delta}$ where $d<D$ and $x, \mu \in \R^D$, with the understanding that
\begin{equation}
    \gaussian{d}{x}{\mu}{\delta} \coloneqq \gconst{d} e^{-d \delta} \exp \left(-\dfrac{1}{2}\Vert x -\mu \Vert^2 e^{-2\delta}\right)
\end{equation}
is not a density as it does not integrate to $1$. We will also heavily use the convolution of $p$ against a zero-centred Gaussian with log standard deviation $\delta$, which we denote as\footnote{Note that, since $p$ is not a density with respect to the Lebesgue measure, the integral in \autoref{eq:gconv} is not a Lebesgue integral, and $\rd x'$ should be understood as the Riemannian measure or volume form on $\M$. Nonetheless, $\gconvnoarg$ is a Lebesgue density in $\R^D$ due to the added Gaussian noise.}
\begin{equation}\label{eq:gconv}
    \gconv \coloneqq \int_\M p(x') \gaussian{D}{x-x'}{0}{\delta} \rd x'.
\end{equation}
We will treat uniform densities analogously to Gaussian ones, and will denote the density of a uniform random variable on a ball of log radius $\delta$ centered at $\mu \in \R^D$, evaluated at $x$, as
\begin{equation}
    \uniform{D}{x}{\mu}{\delta} = \uconst{D}e^{-D\delta}\mathds{1}(x \in \ball{D}{\mu}{e^\delta}), \quad \text{where}\quad \uconst{D} = \pi^{-D/2}\Gamma\left(\tfrac{D}{2} + 1\right),
\end{equation}
and where $\mathds{1}(\cdot)$ denotes an indicator function, $\ball{D}{\mu}{r}$ denotes a $D$-dimensional ball of radius $r$ centred at $\mu$, i.e.\ $\ball{D}{\mu}{r} \coloneqq \{x \in \R^D : \Vert x - \mu \Vert < r \}$, and $\Gamma$ is the gamma function. In analogy to the Gaussian case, we will also use
\begin{equation}
    \uniform{d}{x}{\mu}{\delta} \coloneqq \uconst{d} e^{-d \delta} \mathds{1}(x \in \ball{D}{\mu}{e^\delta})
\end{equation}
for $x, \mu \in \R^D$, and will also write
\begin{equation}
    \uconv \coloneqq \int_\M p(x') \uniform{D}{x-x'}{0}{\delta} \rd x'.
\end{equation}
As mentioned earlier, we will also prove \autoref{eq:main} when $\gconv$ is replaced by $\uconv$. Since
\begin{equation}
    \uconv = \int_{\M \cap \ball{D}{x}{e^\delta}} p(x') \uconst{D}e^{-D\delta} \rd x' = \uconst{D}e^{-D\delta}\mathbb{P}_{X \sim p}(X \in \ball{D}{x}{e^\delta}),
\end{equation}
our result thus explicitly links $\LID$ to the probability assigned by $p$ to a ball of log radius $\delta$ around $x$ through
\begin{equation}\label{eq:lid_ball}
    \LID = \lim_{\delta \rightarrow -\infty} \dfrac{\partial}{\partial \delta} \log \mathbb{P}_{X \sim p}(X \in \ball{D}{x}{e^\delta}).
\end{equation}

\subsection{Diffusion Models and FLIPD}

As already mentioned, the main goal of our paper is to prove \autoref{eq:main} under general and realistic assumptions. In the rest of this section we cover the background that makes this equation relevant, namely DMs and FLIPD. Our proofs do not rely on the content presented in this section, which can safely be skipped by readers who are already familiar with these topics, or by those who are only interested in our theoretical results. We will follow the stochastic differential equation (SDE) formulation of DMs of \citet{song2021score}. DMs are generative models whose goal is to learn $p$; they do this by first instantiating a (forward) noising process through an It\^o SDE,
\begin{equation}\label{eq:forward_sde}
    \rd X_t \coloneqq \alpha(X_t, t)\rd t + \beta(t) \rd W_t, \quad X_0 \sim p,
\end{equation}
where $\alpha: \R^D \times [0,1] \rightarrow \R^D$ and $\beta: [0, 1] \rightarrow \R$ are fixed functions which are specified as hyperparameters, and $W_t$ is a $D$-dimensional Brownian motion. We will denote the density of $X_t$ implied by \autoref{eq:forward_sde} as $\dens{t}$, and note that $\dens{0} = p$.\footnote{Note that $\dens{0}$ and $\dens{t}$ for $t>0$ are not densities in the same sense; only the latter are Lebesgue densities.} Here $t \in [0,1]$ corresponds to an artificial time variable specifying how much noise has been added to the data through the SDE, with $t=0$ corresponding to no noise, and with $t=1$ corresponding to $\dens{1}$ being ``almost pure noise''. A surprising result by \citet{anderson1982reverse} and \citet{haussmann1986time} shows that the reverse process, $Y_{t} \coloneqq X_{1-t}$, also obeys a (backward) SDE,
\begin{equation}\label{eq:backward_sde}
    \rd Y_t = \left[\beta^2(1-t) s(Y_t, 1-t) - \alpha(Y_t, 1-t)\right]\rd t + \beta(1-t)\rd \hat{W}_t, \quad Y_0 \sim \dens{1},
\end{equation}
where $s$ is the (Stein) score function, i.e.\ $s(x,t) \coloneqq \nabla \log p(x,t)$, and $\hat{W}_t$ is another $D$-dimensional Brownian motion. \autoref{eq:backward_sde} is at the core of DMs: by approximating the score function $s$ with a properly trained neural network $\hat{s}$ (and also approximating $\dens{1}$ with an appropriately chosen Gaussian distribution), numerically solving the resulting SDE until time $t=1$ will produce samples from a DM -- these samples will be approximately distributed according to $p$; the better the approximations made throughout, the closer the distribution of these samples will be to $p$ \citep{debortoli2022convergence}.

The function $\alpha$ is often chosen as $\alpha(x,t)=\gamma(t)x$ for some function $\gamma: [0, 1] \rightarrow \R$. Under this choice, the transition kernel corresponding to \autoref{eq:forward_sde} is known \citep{sarkka2019applied}, and it is given by
\begin{equation}\label{eq:transition}
    p_{t \mid 0}(x_t \mid x_0) = \gaussian{D}{x_t}{\psi(t)x_0}{\log \sigma(t)},
\end{equation}
where the functions $\psi, \sigma: [0, 1] \rightarrow \R$ are uniquely determined by $\gamma$ and $\beta$, and can be numerically evaluated for all common choices of $\gamma$ and $\beta$. Furthermore, under these standard choices the ratio $\lambda(t) \coloneqq \sigma(t) / \psi(t)$ is injective and therefore admits a left inverse $\lambda^{-1}$, which can also be numerically evaluated. Recall that the defining property of the transition kernel is that it satisfies
\begin{equation}\label{eq:dm_dens}
    p(x,t) = \int_\M p(x')p_{t\mid 0}(x \mid x') \rd x'.
\end{equation}


The density in \autoref{eq:dm_dens} resulting from the SDE in \autoref{eq:forward_sde} is extremely similar to the convolution in \autoref{eq:gconv}: both of them operate by adding Gaussian noise to data from $p$ (up to the rescaling of the mean by $\psi(t)$ in \autoref{eq:transition}). The main difference between these two ways of adding noise is simply how the amount of added noise is measured: in the DM formulation, no noise corresponds to $t=0$, whereas this setting corresponds to the limit as $\delta \rightarrow -\infty$ for convolutions. Intuitively, it should then be the case that if one has access to a DM, then the right hand side of \autoref{eq:main} can be approximated, thus obtaining an estimate of LID. \citet{kamkari2024geometric2} used this intuition to propose FLIPD. First, they showed that $\gconvnoarg$ and $\dens{t}$ are related through
\begin{equation}\label{eq:dm_conv_corr}
    \log \gconv = D \log \psi\big(t(\delta)\big) + \log p\Big(\psi \big(t(\delta) \big) x, t(\delta)\Big),
\end{equation}
where $t(\delta) \coloneqq \lambda^{-1}(e^\delta)$. Then, using the Fokker-Planck equation -- which provides an explicit formula for $\tfrac{\partial}{\partial t} p(x, t)$ -- along with the chain rule and \autoref{eq:dm_conv_corr}, \citet{kamkari2024geometric2} showed that
\begin{equation}\label{eq:flipd_rate_of_change}
    \dfrac{\partial}{\partial \delta} \log \gconv = \sigma^2\big(t(\delta)\big) \left(\Tr \left(\nabla s\Big(\psi \big(t(\delta) \big)x, t(\delta)\Big)\right) + \Big\Vert s\Big(\psi \big(t(\delta) \big)x, t(\delta)\Big) \Big\Vert^2 \right) \eqcolon \nu \big(s, x, t(\delta)\big).
\end{equation}
Importantly, evaluating $\nu$ requires access to $p(x, t)$ only through $s$. Lastly, by using the trained score function $\hat{s}$ to approximate the unknown $s$ and by setting a negative enough $\delta_0$, FLIPD is obtained by combining \autoref{eq:flipd_rate_of_change} with \autoref{eq:main} as
\begin{equation}
    \FLIPD{\delta_0} \coloneqq D + \nu \big(\hat{s}, x, t(\delta_0)\big).
\end{equation}
The correctness of FLIPD as an estimator of LID therefore hinges on \autoref{eq:main} holding in general, and as previously mentioned, \citet{kamkari2024geometric2} only proved \autoref{eq:main} in the case where $\M$ is an affine submanifold of $\R^D$. We refer the reader to \citet{kamkari2024geometric2} for a more thorough derivation of FLIPD, along with illustrative examples, and empirical results.

\section{Related Work}
As mentioned in the introduction, statistical estimators of LID typically rely on the computation of all pairwise distances or angles on a given dataset sampled from $p$ \citep{fukunaga1971algorithm, pettis1979intrinsic, grassberger1983measuring, levina2004maximum, mackay2005comments, johnsson2014low, facco2017estimating, bac2021}, resulting in these estimators exhibiting poor scaling on dataset size and ambient dimension $D$. More related to our work is research aiming to leverage deep generative models for LID estimation. \citet{stanczuk2024diffusion} proposed an estimator which also leverages DMs but is not based on \autoref{eq:main} and requires many more function evaluations of $\hat{s}$ than FLIPD. \citet{horvat2024gauge} proposed another method using DMs for LID estimation, but it requires altering the training procedure of the DM. As a result, these methods are not compatible with off-the-shelf state-of-the-art DMs such as Stable Diffusion \citep{rombach2022high} -- to the best of our knowledge FLIPD is the only estimator which scales to this setting, making it particularly relevant to properly justify it.

Other works have used generative models beyond DMs, for example \citet{zheng2022learning} showed that the number of active latent dimensions in variational autoencoders \citep{kingma2014auto, rezende2014stochastic} estimates LID. Lastly, \citet{tempczyk2022lidl} proved another convolution-related result which they leveraged for LID estimation with normalizing flows \citep{dinh2017density, durkan2019neural}. More specifically, they showed that for $x \in \M$, as $\delta \rightarrow -\infty$,
\begin{equation}\label{eq:lidl}
    \log \gconv = \delta (\LID - D) + \mathcal{O}(1).
\end{equation}
By adding different levels of noise $\delta$ to data and training a normalizing flow for each noise level, \citet{tempczyk2022lidl} estimate $\log \gconv$ for various values of $\delta$; they then fit a linear regression predicting these estimates from $\delta$ -- the resulting slope is an estimate $\LID - D$ thanks to \autoref{eq:lidl}. When proposing FLIPD, \citet{kamkari2024geometric2} noted that \autoref{eq:lidl} provides informal intuition for why \autoref{eq:main} holds: if the $\mathcal{O}(1)$ term was constant with respect to $\delta$, then \autoref{eq:lidl} would imply \autoref{eq:main}. In this sense, proving \autoref{eq:main} boils down to showing that the $\mathcal{O}(1)$ term indeed behaves like a constant as $\delta \rightarrow -\infty$, or more formally, that the limit of its derivative with respect to $\delta$ converges to $0$.

Several works have studied the interplay between DMs and the manifold hypothesis in contexts beyond LID estimation: \citet{pidstrigach2022score} first showed that if the error between $\hat{s}$ and $s$ is bounded, then DMs recover the correct support, thus showing that DMs can learn manifolds. \citet{debortoli2022convergence} refined this result with error bounds in Wasserstein distance, and various follow-up works have provided finite-sample estimation rates \citep{chen2023score, oko2023diffusion, tang2024adaptivity, wang2024diffusion}. Lastly, \citet{chen2024exploring} observed that, under the manifold hypothesis, the Jacobian of the score function has low rank, and they further leveraged this observation for image editing.

\section{Result Statements and Discussion}
In this section we state and discuss our results. Partial proofs that avoid differential geometry \citep{lee2012smooth, lee2018introduction} are provided in \autoref{sec:proofs}; the remaining steps, which rely on these tools, are given in the appendices. In our first result, we establish that \autoref{eq:main} holds when $\M$ is an arbitrary submanifold of $\R^D$ and $p$ obeys continuity and finite second moment conditions.

\begin{theorem}\label{th:gaussian}
Let $\M$ be a smooth $d$-dimensional embedded submanifold of $\R^D$ and let $p$ be a probability density function on $\M$. Let $x\in \M$ be such that $p$ is continuous at $x$, $p(x)>0$, and $C \coloneqq \int_\M p(x') \dist^2(x, x') \rd x' < \infty$, where $\dist$ denotes the geodesic distance on $\M$ obtained from the induced Riemannian metric on $\M$. Then,
\begin{equation}
    \lim_{\delta \rightarrow -\infty} \dfrac{\partial}{\partial \delta} \log \gconv = d - D.
\end{equation}
\end{theorem}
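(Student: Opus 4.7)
The strategy is to reduce the statement to a Laplace-type localization on $\M$ near $x$ and then resolve it in normal coordinates after a $\sigma=e^\delta$ rescaling.

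First, I would push $\partial/\partial\delta$ under the integral defining $\gconv$ (justified by dominated convergence, using $\|x-x'\|\le\dist(x,x')$ together with $C<\infty$) to obtain
\begin{equation*}
    \dfrac{\partial}{\partial \delta}\log\gconv + D \;=\; \sigma^{-2}\,\dfrac{\int_\M p(x')\,\|x-x'\|^2 e^{-\|x-x'\|^2/(2\sigma^2)}\rd x'}{\int_\M p(x')\, e^{-\|x-x'\|^2/(2\sigma^2)}\rd x'},
\end{equation*}
so the theorem reduces to showing that this right-hand side tends to $d$ as $\sigma\to 0^+$. Heuristically, the measure $p(x')e^{-\|x-x'\|^2/(2\sigma^2)}\rd x'$ concentrates on a $\sigma$-scale neighborhood of $x$ in $\M$, so the ratio should tend to $\sigma^2 d$, the second moment of a standard Gaussian on $T_x\M\cong\R^d$.

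To make this rigorous I would fix a normal-coordinate chart $\phi:\ball{d}{0}{r}\to V_0\subset\M$ at $x$ and split each integral over $\M$ as its parts on $V_0$ and on $\M\setminus V_0$. The tails are harmless: on $\M\setminus V_0$ one has $\|x-x'\|\ge c>0$, so both tail integrals are at most $e^{-c^2/(2\sigma^2)}\max(1,C)$, which is $o(\sigma^k)$ for every $k$ and thus negligible after dividing by the leading scales $\sigma^d$ and $\sigma^{d+2}$. On $V_0$ I would change variables $x'=\phi(u)$ and rescale $u=\sigma v$, invoking two standard facts from normal coordinates on a smooth embedded submanifold: $\sqrt{\det g(u)}=1+O(\|u\|^2)$, and, because geodesic accelerations at $x$ lie in the normal bundle, $\|\phi(u)-x\|^2=\|u\|^2+O(\|u\|^4)$; in particular $\|\phi(u)-x\|\le\|u\|$ by minimality of Euclidean straight lines. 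The rescaled integrands then converge pointwise to $p(x)e^{-\|v\|^2/2}$ and $p(x)\|v\|^2e^{-\|v\|^2/2}$ respectively. Shrinking $r$ so that $\|\phi(u)-x\|^2\ge\tfrac12\|u\|^2$ and $\sqrt{\det g(u)}\le 2$ on $\ball{d}{0}{r}$, and that $p\circ\phi$ is bounded there (by continuity of $p$ at $x$), the rescaled integrands are $\sigma$-uniformly dominated by constant multiples of $e^{-\|v\|^2/4}$ and $\|v\|^2 e^{-\|v\|^2/4}$. Dominated convergence then gives that $\sigma^{-d}$ and $\sigma^{-(d+2)}$ times the $V_0$-parts of the denominator and numerator converge to $p(x)(2\pi)^{d/2}$ and $p(x)(2\pi)^{d/2}d$; combined with the tail bounds, $\sigma^{-2}$ times the full ratio converges to $d$.

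The main obstacle is this dominated-convergence step: one must control the geometric perturbations — both the Jacobian $\sqrt{\det g}$ and the embedding deviation $\|\phi(u)-x\|^2-\|u\|^2$ — uniformly on $\ball{d}{0}{r}$ so that, after $\sigma$-rescaling, the bounds are uniform in $\sigma$ on all of $\R^d$. Once those are in place, the remainder is bookkeeping combined with the standard Gaussian identity $\int_{\R^d}\|v\|^2 e^{-\|v\|^2/2}\rd v=d(2\pi)^{d/2}$.
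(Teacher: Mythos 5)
Your proposal is correct and follows the same high-level strategy as the paper: differentiate under the integral (justified by the second-moment condition), reduce the statement to showing that the ratio of the Gaussian-weighted second moment to the zeroth moment tends to $d\sigma^2$, localize to a coordinate patch around $x$, show the tail contribution is negligible, and compare the localized integral to the flat Euclidean Gaussian. The technical execution differs in two places. First, you use normal coordinates via $\exp_x$, so the normality of the second fundamental form directly gives $\|\phi(u)-x\|^2=\|u\|^2+O(\|u\|^4)$ and $\|\phi(u)-x\|\le\|u\|$; the paper instead uses a graph parametrization $y\mapsto(y,u(y))$ with $u(0)=0$, $Du(0)=0$, and controls the distortion through the continuous function $v(y)=\|u(y)\|/\|y\|$ extended by $v(0)=0$. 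Second, and more substantively, you pass to the rescaled variable $v=u/\sigma$ and close both moment computations with a single application of dominated convergence on $\R^d$; the paper instead sandwiches the pulled-back kernel between two genuine Gaussians with slightly different log-variances, takes $\liminf$/$\limsup$, and then shrinks the neighborhood so the distortion constant $K_V$ tends to $0$, along with a separate claim handling the second moment. Your rescale-and-dominate version is arguably cleaner and packages the Jacobian and embedding distortions in one DCT, at the cost of needing the two uniform estimates $\|\phi(u)-x\|^2\ge\tfrac12\|u\|^2$ and $\sqrt{\det g}\le 2$ on a fixed small coordinate ball; the paper's squeeze argument avoids fixing such a ball by letting $V$ shrink only after the $\delta\to-\infty$ limit is taken.
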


Our continuity and second moment assumptions in \autoref{th:gaussian} are mild and are completely analogous to corresponding assumptions made by \citet{kamkari2024geometric2}. Nevertheless, we emphasize once again that \citet{kamkari2024geometric2} also assumed that $\M$ is an affine submanifold of $\R^D$, which is a much stronger and unrealistic assumption on the structure of $\M$. Watchful readers might have noticed that the finite second moment assumption -- i.e.\ $\int_\M p(x') \dist^2(x, x') \rd x' < \infty$ -- implies that $\M$ must be connected (or more precisely, $p$ must assign all its probability mass to a single connected component of $\M$), which is likely not realistic in many settings of practical interest. However this obstacle is minor since, as we show below, \autoref{th:gaussian} can be straightforwardly extended to the case where $\M$ is a disjoint union of submanifolds of $\R^D$ of potentially varying dimensions.

\begin{corollary}\label{cor:gaussian}
    Let $\M = \cup_j \M_j$, where $\M_j$ is a smooth $d_j$-dimensional embedded submanifold of $\R^D$ with $\xi \coloneqq \min_{i \neq j}\inf_{x_i \in \M_i, x_j \in \M_j} \Vert x_i - x_j \Vert > 0$. Let $p$ be a probability density function on $\M$, which we write as $p(x) = \pi_j p_j(x)$ for $x \in \M_j$, where for every $j$, $p_j$ is a probability density on $\M_j$ and $\pi_j > 0$, and $\sum_j \pi_j = 1$.\footnote{Note that, because $\xi > 0$, without loss of generality any density $p$ on $\M$ can be written as $p(x) = \pi_j p_j(x)$ when $x \in \M_j$.} Let $x \in \M_i$ for some $i$ be such that $p_i$ is continuous at $x$, $p_i(x) > 0$, and $\int_{\M_i} p_i(x') \texttt{dist}_{\M_i}^2(x, x') \rd x' < \infty$, where $\texttt{dist}_{\M_i}$ denotes the geodesic distance on $\M_i$ obtained from the induced Riemannian metric on $\M_i$. Then,
    \begin{equation}
    \lim_{\delta \rightarrow -\infty} \dfrac{\partial}{\partial \delta} \log \gconv = d_i - D.
\end{equation}
\end{corollary}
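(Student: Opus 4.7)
The plan is to reduce \autoref{cor:gaussian} to \autoref{th:gaussian} by decomposing the convolution over the components of $\M$ and showing that, as $\delta \to -\infty$, every component $\M_j$ with $j \neq i$ contributes negligibly to both the numerator and denominator of $\partial_\delta \log \gconv$. Setting $\varrho^{(j)}(x, \delta) \coloneqq \int_{\M_j} p_j(x') \gaussian{D}{x-x'}{0}{\delta} \rd x'$, we have $\gconv = \sum_j \pi_j \varrho^{(j)}(x, \delta)$. All hypotheses of \autoref{th:gaussian} hold when $\M$, $p$, and $\dist$ are replaced by $\M_i$, $p_i$, and the geodesic distance on $\M_i$, so that theorem yields $\partial_\delta \log \varrho^{(i)}(x, \delta) \to d_i - D$. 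Integrating this estimate over $[\delta, \delta_0]$ for a fixed reference $\delta_0$ below which $\partial_\delta \log \varrho^{(i)} < d_i - D + \epsilon$, and using $\varrho^{(i)}(x, \delta_0) > 0$, yields a power-law lower bound $\varrho^{(i)}(x, \delta) \geq c\, e^{(d_i - D + \epsilon)\delta}$ for some $c>0$.

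Next I would control the off-diagonal terms using the separation $\xi > 0$. For any $x' \in \M_j$ with $j \neq i$ we have $\Vert x - x' \Vert \geq \xi$, so the pointwise estimate $\gaussian{D}{x-x'}{0}{\delta} \leq \gconst{D} e^{-D\delta} \exp(-\xi^2 e^{-2\delta}/2)$ immediately implies $\varrho^{(j)}(x, \delta) \leq \gconst{D} e^{-D\delta} \exp(-\xi^2 e^{-2\delta}/2)$. Differentiating under the integral (justified by Gaussian tail bounds) and then using that $u \mapsto u e^{-u/2}$ is decreasing on $u \geq 2$, so that $\Vert x - x'\Vert^2 e^{-2\delta} \exp(-\Vert x - x'\Vert^2 e^{-2\delta}/2) \leq \xi^2 e^{-2\delta} \exp(-\xi^2 e^{-2\delta}/2)$ once $\xi^2 e^{-2\delta} \geq 2$, one likewise obtains
\begin{equation*}
    |\partial_\delta \varrho^{(j)}(x, \delta)| \leq \gconst{D} e^{-D\delta} \bigl(D + \xi^2 e^{-2\delta}\bigr) \exp\bigl(-\xi^2 e^{-2\delta}/2\bigr)
\end{equation*}
for all sufficiently negative $\delta$.

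To conclude, I would divide the numerator and denominator of
\begin{equation*}
    \dfrac{\partial}{\partial \delta} \log \gconv = \dfrac{\pi_i \partial_\delta \varrho^{(i)}(x,\delta) + \sum_{j \neq i} \pi_j \partial_\delta \varrho^{(j)}(x,\delta)}{\pi_i \varrho^{(i)}(x,\delta) + \sum_{j \neq i} \pi_j \varrho^{(j)}(x,\delta)}
\end{equation*}
by $\pi_i \varrho^{(i)}(x,\delta)$. By the preceding bounds, each off-diagonal ratio $\varrho^{(j)}/\varrho^{(i)}$ and $\partial_\delta \varrho^{(j)}/\varrho^{(i)}$ is dominated by $\exp(-\xi^2 e^{-2\delta}/2)$ multiplied by a power of $e^{-\delta}$, and therefore tends to $0$ since doubly exponential decay beats any power-law growth. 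The expression accordingly reduces in the limit to $\partial_\delta \log \varrho^{(i)}(x, \delta) \to d_i - D$, as claimed. The main obstacle I foresee is precisely this asymmetry: \autoref{th:gaussian} supplies only the limit of the logarithmic derivative of $\varrho^{(i)}$, not a sharp lower bound on $\varrho^{(i)}$ itself, which is what is needed to rule out cancellation in the denominator. The integration trick above recovers such a lower bound at the cost of a slack $\epsilon > 0$ in the exponent, and this slack is harmless because the off-diagonal super-exponential decay absorbs any power-of-$e^{-\delta}$ prefactor.
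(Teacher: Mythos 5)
Your proposal is correct and follows essentially the same high-level strategy as the paper: decompose $\gconv$ by component, exploit the separation $\xi>0$ to make the off-diagonal terms decay super-exponentially, and reduce to the single-manifold result \autoref{th:gaussian}. The one genuine point of divergence is how you obtain the lower bound on the diagonal contribution $\varrho^{(i)}(x,\delta)$. The paper reaches inside the proof of \autoref{th:gaussian} and invokes \autoref{prop:gaussian-denominator} directly, which gives $\int_{\M_i} p_i(x')\gaussian{d_i}{x-x'}{0}{\delta}\rd x' \to p_i(x) > 0$ and hence $\varrho^{(i)}(x,\delta) \gtrsim e^{(d_i - D)\delta}$; you instead treat \autoref{th:gaussian} as a black box and recover the slightly weaker bound $\varrho^{(i)}(x,\delta)\gtrsim e^{(d_i - D + \epsilon)\delta}$ by integrating the log-derivative limit from a reference $\delta_0$. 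Both suffice because, as you correctly emphasize, the off-diagonal factor $e^{-D\delta}\exp(-\xi^2 e^{-2\delta}/2)$ beats any power of $e^{-\delta}$. If anything, your framing is marginally cleaner on one point: by dividing numerator and denominator through by $\pi_i\varrho^{(i)}$ you make explicit that what matters are the \emph{ratios} $\varrho^{(j)}/\varrho^{(i)}$ and $\partial_\delta\varrho^{(j)}/\varrho^{(i)}$, whereas the paper's wording that the diagonal integral ``has a positive limit'' glosses over the fact that $\int_{\M_i}p_i(x')\gaussian{D}{x-x'}{0}{\delta}\rd x'$ actually diverges to $+\infty$ when $d_i<D$. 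One small thing you leave implicit, worth stating if the index set is infinite: your bounds on $\varrho^{(j)}$ and $\partial_\delta\varrho^{(j)}$ are uniform over $j$, so the tails $\sum_{j\neq i}\pi_j\varrho^{(j)}$ and $\sum_{j\neq i}\pi_j\partial_\delta\varrho^{(j)}$ are controlled by $\sum_j\pi_j=1$ times the same super-exponentially decaying quantity; the paper handles the analogous point via dominated convergence.
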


\autoref{cor:gaussian} formally justifies FLIPD in the sense that it shows that, under mild regularity conditions,
\begin{equation}
    \lim_{\delta \rightarrow -\infty} D + \nu(s,x, t(\delta)) = \LID,
\end{equation}
that is, when the score function is perfectly learned, i.e.\ $\hat{s}=s$, we have that $\FLIPD{\delta} \rightarrow \LID$ as $\delta \rightarrow -\infty$. 

For the uniform case, analogously to the Gaussian one, we first prove the case where $\M$ is a $d$-dimensional submanifold, and then extend the result to disjoint unions of submanifolds. Note that our results with uniform distributions do not require the second moment conditions required in the Gaussian case.

\begin{theorem}\label{th:uniform}
Let $\M$ be a smooth $d$-dimensional embedded submanifold of $\R^D$ and let $p$ be a probability density function on $\M$. Let $x\in \M$ be such that $p$ is continuous at $x$ and $p(x)>0$. Then,
\begin{equation}
    \lim_{\delta \rightarrow -\infty} \dfrac{\partial}{\partial \delta} \log \uconv = d - D.
\end{equation}
\end{theorem}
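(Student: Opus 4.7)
My plan is to reduce the theorem to a local computation by using the explicit formula $\uconv = \uconst{D} e^{-D\delta} F(e^\delta)$, where $F(r) \coloneqq \mathbb{P}_{X \sim p}(X \in \ball{D}{x}{r})$, which is already spelled out in the excerpt. Substituting $r = e^\delta$ yields $\frac{\partial}{\partial \delta} \log \uconv = -D + r F'(r)/F(r)$, so it suffices to prove that $\lim_{r \to 0^+} r F'(r)/F(r) = d$.

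To obtain the asymptotics of $F$, I would exploit that $\M$ is embedded to locally parameterize it as a graph over its tangent plane at $x$. Concretely, there is a neighborhood $V \subset \R^D$ of $x$ and a smooth $h : U \subset T_x \M \to T_x \M^\perp$ with $h(0) = 0$ and $dh_0 = 0$ such that $\M \cap V$ coincides with $\{x + (u, h(u)) : u \in U\}$ in orthonormal coordinates adapted to $T_x \M$ and its orthogonal complement. Because $\M$ is embedded, $\ball{D}{x}{r} \cap \M \subset V$ for all sufficiently small $r > 0$, so the probability mass relevant to $F(r)$ is entirely captured by this single chart. Since $dh_0 = 0$, we have $\|h(u)\| = O(\|u\|^2)$, hence $\|(u, h(u))\|^2 = \|u\|^2 \bigl(1 + O(\|u\|^2)\bigr)$. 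Continuity of $p$ at $x$ together with the fact that the Riemannian volume form equals Lebesgue measure to leading order then yields, via the rescaling $u = r v$ and dominated convergence, the estimate $F(r) = p(x)\, r^d / \uconst{d} + o(r^d)$ as $r \to 0^+$ (recalling $1/\uconst{d}$ is the volume of the unit $d$-ball).

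For $F'(r)$ I would invoke the coarea formula applied to the restriction of $\phi(y) \coloneqq \|y - x\|$ to $\M$. For sufficiently small $r$, the graph parameterization shows that $|\nabla_\M \phi|$ is bounded below by a positive constant on $\M \cap V \setminus \{x\}$, so every such $r$ is a regular value of $\phi|_\M$; hence $\M \cap \partial \ball{D}{x}{r}$ is a smooth $(d{-}1)$-dimensional submanifold and $F$ is $C^1$ with $F'(r) = \int_{\M \cap \partial \ball{D}{x}{r}} p(y)/|\nabla_\M \phi(y)| \, d\mathcal{H}^{d-1}(y)$. Using the same chart, this level set is an $O(r^2)$-perturbation of the Euclidean sphere $\{u : \|u\| = r\} \subset T_x \M$; the tangential gradient satisfies $|\nabla_\M \phi| = 1 + O(r^2)$ because the ambient radial unit vector is nearly tangential to $\M$ at points within distance $r$ of $x$; and the induced $(d{-}1)$-Hausdorff measure agrees with the Euclidean spherical measure up to the same order. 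Combining these estimates with continuity of $p$ gives $F'(r) = d\, p(x)\, r^{d-1}/\uconst{d} + o(r^{d-1})$, so $r F'(r)/F(r) \to d$.

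The main obstacle I anticipate is making the geometric estimates for $F'(r)$ fully rigorous in a pointwise rather than averaged sense: bounding the $O(r^2)$ perturbation of the level sets, the deviation of the tangential gradient from $1$, and the deviation of the induced surface measure from its Euclidean counterpart all require the tubular-neighborhood structure plus the second-fundamental-form bound $\|h(u)\| = O(\|u\|^2)$ sketched above. This is consistent with the paper's comment that the differential-geometric steps are relegated to the appendices, while the reduction to the ratio $r F'(r)/F(r)$ and the outline of the asymptotics can be presented without that machinery.
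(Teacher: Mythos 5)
Your proposal is correct in outline and shares the paper's key reduction: both arguments pass to the ratio $rF'(r)/F(r)$ (equivalently, the paper's $\partial_\delta \int_{V_\delta} p\sqrt{g}\, \rd y \big/ \int_{V_\delta} p\sqrt{g}\, \rd y$), and both set up a local graph chart $u \mapsto (u, h(u))$ over $T_x\M$ with $h(0)=0$, $dh_0 = 0$, followed by using continuity of $p$ (and of $\sqrt{g}$) to factor the density out of the integrals. The genuine divergence is in the core geometric estimate. You invoke the coarea formula and stay at the level set $\M \cap \partial B_D(x,r)$, which requires three boundary-level estimates that you correctly flag as the hard part: that $r$ is a regular value of $\phi|_\M = \|\cdot - x\|$, that $|\nabla_\M \phi| = 1 + O(r^2)$, and that the induced $(d{-}1)$-Hausdorff measure on the level set agrees with the Euclidean spherical measure up to $O(r^2)$. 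The paper instead computes the same derivative via the flow form of the Leibniz rule with velocity field $\nabla G/\|\nabla G\|^2$ for $G(y) = \tfrac12 \log(\|y\|^2 + \|u(y)\|^2)$, and then applies Stokes' theorem to convert the boundary integral into $\int_{V_\delta} \operatorname{div}\big(\nabla G/\|\nabla G\|^2\big)\, \rd y$. This trades the surface-measure comparison for an algebraic verification that the divergence extends to a $C^0$ function with value exactly $d$ at the origin (driven by the lemma that $u = O(\|y\|^2)$ makes various rational expressions $C^1$ across $0$). The two routes compute the same quantity, but the Stokes/divergence route avoids ever comparing Hausdorff measures on a perturbed sphere to Euclidean spherical measure, which makes it noticeably easier to push through rigorously; on the other hand, your route yields the sharper individual asymptotics $F(r) \sim p(x)\, r^d / \uconst{d}$ and $F'(r) \sim d\, p(x)\, r^{d-1}/\uconst{d}$ rather than only the ratio. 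One small technical point worth being careful about if you flesh this out: the regular-value claim and the bound $|\nabla_\M\phi| \geq c > 0$ only hold on a sufficiently small punctured neighborhood, and the neighborhood must be chosen \emph{before} taking $r \to 0^+$; the paper's Proposition~3 handles the analogous nonvanishing of $\nabla G$ and is a good template.
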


\begin{corollary}\label{cor:uniform}
    Let $\M = \cup_j \M_j$, where $\M_j$ is a smooth $d_j$-dimensional embedded submanifold of $\R^D$ with $\xi \coloneqq \min_{i\neq j}\inf_{x_i \in \M_i, x_j \in \M_j} \Vert x_i - x_j \Vert > 0$. Let $p$ be a probability density function on $\M$, which we write as $p(x) = \pi_j p_j(x)$ for $x \in \M_j$, where for every $j$, $p_j$ is a probability density on $\M_j$ and $\pi_j > 0$, and $\sum_j \pi_j = 1$. Let $x \in \M_i$ be such that $p_i$ is continuous at $x$ and $p_i(x)>0$. Then,
\begin{equation}
    \lim_{\delta \rightarrow -\infty} \dfrac{\partial}{\partial \delta} \log \uconv = d_i - D.
\end{equation}
\end{corollary}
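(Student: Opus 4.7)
The plan is to reduce \autoref{cor:uniform} to \autoref{th:uniform} by a localization argument that exploits the strict separation $\xi > 0$ between components. The key observation is that, unlike the Gaussian kernel which has full support, the uniform kernel $\uniform{D}{\cdot}{0}{\delta}$ has bounded support of radius $e^\delta$; therefore, once $e^\delta < \xi$, the kernel centred at $x \in \M_i$ is identically zero on every other component $\M_j$.

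More concretely, I would first split the convolution integral along the decomposition $\M = \cup_j \M_j$ and write
\begin{equation}
    \uconv = \sum_j \pi_j \int_{\M_j} p_j(x') \uniform{D}{x-x'}{0}{\delta} \, \rd x'.
\end{equation}
For any $j \neq i$ and any $x' \in \M_j$ we have $\Vert x - x' \Vert \geq \xi$ by definition of $\xi$, so whenever $\delta < \log \xi$ the indicator $\mathds{1}(x' \in \ball{D}{x}{e^\delta})$ inside $\uniform{D}{x-x'}{0}{\delta}$ vanishes on $\M_j$. Hence for all such $\delta$,
\begin{equation}
    \uconv = \pi_i \int_{\M_i} p_i(x') \uniform{D}{x-x'}{0}{\delta} \, \rd x'.
\end{equation}

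Next, I would take the logarithm and differentiate with respect to $\delta$. The constant $\log \pi_i$ drops out, leaving
\begin{equation}
    \dfrac{\partial}{\partial \delta}\log \uconv = \dfrac{\partial}{\partial \delta} \log \left( \int_{\M_i} p_i(x') \uniform{D}{x-x'}{0}{\delta} \, \rd x' \right)
\end{equation}
for all $\delta < \log \xi$. The right-hand side is precisely $\partial_\delta \log \varrho_\mathcal{U}^{(i)}(x,\delta)$ for the single-manifold convolution of $p_i$ on $\M_i$. Since $(\M_i, p_i, x)$ satisfies the hypotheses of \autoref{th:uniform}, we may apply it directly to obtain the limit $d_i - D$ as $\delta \to -\infty$.

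There is essentially no hard step here: the entire content of the corollary is the localization observation, which is made clean by the bounded support of the uniform kernel, and the rest is a direct invocation of \autoref{th:uniform}. The only minor subtlety worth noting in the write-up is that this proof is strictly easier than its Gaussian analogue (\autoref{cor:gaussian}): there, the Gaussian tails force one to carry the cross-component contributions through the analysis and show they are negligible relative to the on-component term, whereas here they are identically zero for all sufficiently small $\delta$. This also clarifies why no second moment hypothesis is needed: integrability of $p_i \dist_{\M_i}^2$ plays no role, because the domain of integration is automatically compact for small $\delta$.
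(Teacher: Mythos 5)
Your proof is correct and follows the same route as the paper: split the convolution over the disjoint components, observe that the bounded support of the uniform kernel makes every cross-component term vanish identically once $e^\delta < \xi$, and reduce to \autoref{th:uniform} on $\M_i$. The paper's proof is essentially a terser version of what you wrote.
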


As previously discussed, \autoref{cor:uniform} immediately links $\LID$ and $\mathbb{P}_{X \sim p}(X \in \ball{D}{x}{e^\delta})$ through \autoref{eq:lid_ball}. Albeit \autoref{cor:gaussian} is more relevant due to its connection to DMs and FLIPD, \autoref{cor:uniform} still succeeds at connecting two quantities of interest in machine learning despite lacking these connections. 
Some older estimators of LID use a given a dataset sampled from $p$ by regressing the log proportion of datapoints in $\ball{D}{x}{e^\delta}$ against $\delta$, and then estimating $\LID$ as the corresponding slope \citep{pettis1979intrinsic, grassberger1983measuring}. These works provided an intuitively correct but not rigorous mathematical justification behind this estimator, which \autoref{eq:lid_ball} rigorously justifies.

\section{Proofs}\label{sec:proofs}
\subsection{Gaussian Case: \autoref{th:gaussian} and \autoref{cor:gaussian}}


\begin{proof}[Proof of \autoref{th:gaussian}]
The core idea behind the proof of \autoref{th:gaussian} is to exploit the following relationship between $\gaussian{D}{x}{\mu}{\delta}$ and $\gaussian{d}{x}{\mu}{\delta}$:
\begin{equation}\label{eq:gauss_rel}
\gaussian{D}{x-x'}{0}{\delta} = (2\pi)^{\frac{d-D}{2}}e^{\delta(d-D)} \gaussian{d}{x-x'}{0}{\delta}.
\end{equation}
Thus we have 
\begin{equation}
\gconv = \int_\M p(x')\gaussian{D}{x-x'}{0}{\delta} \rd x' = (2\pi)^{\frac{d-D}{2}}e^{\delta(d-D)} \int_\M p(x')\gaussian{d}{x-x'}{0}{\delta} \rd x'.
\end{equation}
To simplify notation, we let
\begin{equation}
\paux{N}(x) \coloneqq \int_\M p(x')\gaussian{d}{x-x'}{0}{\delta}\rd x',
\end{equation}
so that
\begin{equation}\label{eq:th1_to_prove}
\frac{\partial}{\partial \delta} \log\gconv = d-D + \frac{\partial}{\partial \delta} \log \paux{N}(x).
\end{equation}
Thus it suffices to show that
\begin{equation}\label{eq:gauss_to_prove}
\lim_{\delta\rightarrow -\infty}\frac{\partial}{\partial \delta} \log \paux{N}(x) = 0.
\end{equation}
First, note that we have
\begin{equation}
\dfrac{\partial}{\partial\delta} \gaussian{d}{x-x'}{0}{\delta} = (-d + \|x - x'\|^2e^{-2\delta})\gaussian{d}{x-x'}{0}{\delta}.
\end{equation}
Then,
\begin{align}
\displaystyle\lim_{\delta\rightarrow -\infty}\frac{\partial}{\partial \delta} \log \paux{N}(x) &= \displaystyle\lim_{\delta\rightarrow -\infty}\frac{\frac{\partial}{\partial \delta}\paux{N}(x)}{\paux{N}(x)}\\
&= \displaystyle\lim_{\delta\rightarrow -\infty}\frac{\frac{\partial}{\partial \delta}\int_\M p(x') \gaussian{d}{x-x'}{0}{\delta}\rd x'}{\int_\M p(x') \gaussian{d}{x-x'}{0}{\delta}\rd x'}\label{eq:preswap}\\
&= \displaystyle\lim_{\delta\rightarrow -\infty}\frac{\int_\M p(x') \frac{\partial}{\partial \delta}\gaussian{d}{x-x'}{0}{\delta}\rd x'}{\int_\M p(x') \gaussian{d}{x-x'}{0}{\delta}\rd x'}\label{eq:postswap}\\
&= \displaystyle\lim_{\delta\rightarrow -\infty}\frac{\int_\M p(x') (-d + \|x-x'\|^2 e^{-2\delta})\gaussian{d}{x-x'}{0}{\delta}\rd x'}{\int_\M p(x') \gaussian{d}{x-x'}{0}{\delta}\rd x'}\\
&= -d + \displaystyle\lim_{\delta\rightarrow -\infty}\frac{e^{-2\delta}\int_\M p(x') \|x-x'\|^2 \gaussian{d}{x-x'}{0}{\delta}\rd x'}{\int_\M p(x') \gaussian{d}{x-x'}{0}{\delta}\rd x'}. \label{eq:last_gauss}
\end{align}
Note that going from \autoref{eq:preswap} to \autoref{eq:postswap} involves moving the derivative inside the integral. We formally justify this step in \autoref{app:th1}. 
We then leverage the following two propositions, whose proofs we also include in \autoref{app:th1}.
\begin{prop}\label{prop:gaussian-denominator}
Under the assumptions of \autoref{th:gaussian},
\begin{equation}
\lim_{\delta\rightarrow -\infty} \int_\M p(x') \gaussian{d}{x-x'}{0}{\delta}\rd x' = p(x).
\end{equation}
\end{prop}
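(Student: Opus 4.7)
The plan is to treat $\gaussian{d}{\cdot}{0}{\delta}$ as a $d$-dimensional Gaussian approximate identity: as $\delta\to-\infty$ it concentrates on the tangent plane $T_x\M$, and since $\M$ is locally $d$-dimensional at $x$, averaging $p$ against it along $\M$ should recover $p(x)$. I would execute this in three steps: (i) truncate outside a small neighborhood of $x$ in $\M$, (ii) rewrite the local piece using a graph chart of $\M$, and (iii) rescale by $e^{-\delta}$ and apply dominated convergence.

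For the tail, I would pick an open neighborhood $U\subset\M$ of $x$ small enough that $p\leq M$ on $U$ (possible by continuity of $p$ at $x$) and such that $\M\setminus U$ lies at positive ambient Euclidean distance $r>0$ from $x$. Then on $\M\setminus U$ one has $\gaussian{d}{x-x'}{0}{\delta}\leq \gconst{d}e^{-d\delta}\exp(-\tfrac{r^2}{2}e^{-2\delta})$, and since $\int_\M p(x')\rd x' = 1$ the tail integral is bounded by this same quantity, which tends to $0$ as $\delta\to-\infty$ because the double-exponential decay overwhelms the polynomial growth in $e^{-\delta}$.

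For the local piece I would apply a rigid motion so that $x=0$ and $T_x\M = \R^d\times\{0\}$, and parameterize $\M\cap U$ as a graph $\varphi(u)=(u,h(u))$ for some smooth $h:V\subset\R^d\to\R^{D-d}$ with $h(0)=0$ and $Dh(0)=0$. Then $\|x-\varphi(u)\|^2 = \|u\|^2 + \|h(u)\|^2$ with $\|h(u)\|^2=O(\|u\|^4)$, and the Riemannian volume pulls back to $\sqrt{\det(I+Dh(u)^TDh(u))}\,\rd u = (1+O(\|u\|^2))\,\rd u$. Substituting $v=e^{-\delta}u$ cancels the $e^{-d\delta}$ prefactor exactly, turns the Gaussian into $\exp(-\tfrac12\|v\|^2)$, and leaves an extra factor $\exp(-\tfrac12\|h(e^\delta v)\|^2 e^{-2\delta})$ whose exponent is $O(\|v\|^4 e^{2\delta})\to 0$; the Jacobian factor tends pointwise to $1$, and the domain $e^{-\delta}V$ expands to $\R^d$.

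The step I expect to be the main obstacle is the interchange of limit and integral in the rescaled form, because one must rule out mass escaping to infinity in $v$. After shrinking $U$ so that the graph Jacobian is bounded by some $J$ on $V$, the rescaled integrand is majorized uniformly in $\delta$ by $MJ\exp(-\tfrac12\|v\|^2)$ (the $h$-correction in the exponent is nonpositive, so the extra factor is $\leq 1$), which is integrable on $\R^d$. Dominated convergence then yields the local contribution as $p(x)\gconst{d}\int_{\R^d}\exp(-\tfrac12\|v\|^2)\,\rd v = p(x)$, and together with the vanishing tail this proves the proposition.
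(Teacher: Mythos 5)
Your proposal is correct, and it shares the same overall skeleton as the paper's proof: rigid motion to align $T_x\M$ with $\R^d\times\{0\}$, graph parameterization $\varphi(u)=(u,h(u))$ with $h(0)=0$, $Dh(0)=0$, splitting the integral into a local piece and a tail, and killing the tail by the double-exponential decay of $\gaussian{d}{x-x'}{0}{\delta}$ at fixed positive distance $r$. Where you genuinely diverge is in the treatment of the local piece. The paper sandwiches the ``ambient'' Gaussian factor $\hat N(y,\delta)\coloneqq(2\pi)^{-d/2}e^{-d\delta}\exp(-\tfrac12(\|y\|^2+\|u(y)\|^2)e^{-2\delta})$ between $(1+K_V^2)^{-d/2}N(y,\delta+\delta_0)$ and $N(y,\delta)$, where $N$ is the honest $d$-dimensional Gaussian and $K_V$ bounds $\|u(y)\|/\|y\|$ on $V$; it then invokes the classical Gaussian delta-approximation for both bounds, and closes the resulting $(1+K_V^2)^{-d/2}$ slack by letting $V$ shrink so $K_V\to 0$. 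You instead substitute $v=e^{-\delta}u$, which turns the local integral into a fixed Gaussian against a $\delta$-dependent perturbation (the Jacobian factor and the $\exp(-\tfrac12\|h(e^\delta v)\|^2 e^{-2\delta})$ correction, both $\to 1$ pointwise), and conclude by dominated convergence using the uniform majorant $MJ\gconst{d}e^{-\|v\|^2/2}$, noting that the $h$-correction only makes the exponent more negative. Your route proves the needed approximate-identity fact in one shot and removes the two-stage squeeze-then-shrink-$V$ argument, which makes it slightly more streamlined; the paper's route has the mild advantage of reducing everything to the textbook Gaussian delta-limit without performing a change of variables on a $\delta$-dependent domain. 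Both are valid, and your domination argument correctly handles the growing domain $e^{-\delta}V$ by extending the integrand by zero.
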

Note that \autoref{prop:gaussian-denominator} is very similar to the limit of a Gaussian being a delta function, i.e.\ for a function $f:\R^d \rightarrow \R$ and $y \in \R^d$, we have that $\int_{\R^d} f(y') \gaussian{d}{y - y'}{0}{\delta} \rd y' \rightarrow f(y)$ as $\delta \rightarrow - \infty$. However, the differences here are that: $(i)$ we are computing the convolution on a manifold $\M$, and $(ii)$ the distance in the ``Gaussian'' is the distance of the ambient space, $\mathbb{R}^D$.

\begin{prop}\label{prop:gaussian-numerator}
Under the assumptions of \autoref{th:gaussian},
\begin{equation}
\lim_{\delta\rightarrow -\infty} e^{-2\delta}\int_\M p(x') \|x-x'\|^2 \gaussian{d}{x-x'}{0}{\delta}\rd x' = d\cdot p(x).
\end{equation}
\end{prop}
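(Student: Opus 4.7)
The plan is to follow the same skeleton as the proof of \autoref{prop:gaussian-denominator}: localize the integral around $x$ via Gaussian tail bounds, identify $\M$ locally with $\R^d$ through a chart in which the induced metric is the identity at $x$, and then rescale the integration variable to reduce to an elementary Gaussian identity. The extra ingredient here is that the $e^{-2\delta}$ prefactor multiplied by $\|x-x'\|^2$ inside the integral produces, after the rescaling $u=e^\delta v$, precisely the second moment of a standard $d$-dimensional Gaussian, namely $\int_{\R^d}\|v\|^2\,\gconst{d}\,e^{-\|v\|^2/2}\,\rd v = d$; the factor $p(x)$ emerges from continuity of $p$ at $x$, exactly as in \autoref{prop:gaussian-denominator}.

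First I would fix a small $\epsilon>0$ and split the integral at $\{\|x-x'\|<\epsilon\}\cap\M$ versus its complement. For the tail, the elementary inequality $z\,e^{-z/4}\le 4/e$ applied with $z=\|x-x'\|^2 e^{-2\delta}$ gives
\begin{equation*}
\|x-x'\|^2\exp\!\left(-\tfrac{1}{2}\|x-x'\|^2 e^{-2\delta}\right)\le \tfrac{4}{e}\,e^{2\delta}\exp\!\left(-\tfrac{1}{4}\epsilon^2 e^{-2\delta}\right)
\end{equation*}
uniformly on $\{\|x-x'\|\ge\epsilon\}$; combined with the prefactor $e^{-2\delta}\gconst{d}e^{-d\delta}$ and $\int_\M p(x')\rd x'=1$, the tail contribution is bounded by a constant times $e^{-d\delta}\exp(-\epsilon^2 e^{-2\delta}/4)$, which vanishes as $\delta\to-\infty$ by double-exponential decay (notably with no appeal to the second moment hypothesis). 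For the local part I would pick a smooth chart $\phi:V\subset\R^d\to U\subset\M$ with $\phi(0)=x$ and $D\phi(0)$ having orthonormal columns, which exists because $\M$ is smoothly embedded; then the pulled-back metric satisfies $g(0)=I_d$, so $\sqrt{\det g(u)}\to 1$ and $\|x-\phi(u)\|^2/\|u\|^2\to 1$ as $u\to 0$. Shrinking $\epsilon$ so that $\M\cap B_D(x,\epsilon)\subset U$, applying the chart change of variables, and then rescaling $u=e^\delta v$, the local integral becomes
\begin{equation*}
\int_{A_\delta} p(\phi(e^\delta v))\,\frac{\|x-\phi(e^\delta v)\|^2}{e^{2\delta}}\,\gconst{d}\exp\!\left(-\frac{\|x-\phi(e^\delta v)\|^2}{2\,e^{2\delta}}\right)\sqrt{\det g(e^\delta v)}\,\rd v,
\end{equation*}
where $A_\delta\subset\R^d$ exhausts $\R^d$ as $\delta\to-\infty$. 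The integrand converges pointwise to $p(x)\,\|v\|^2\,\gconst{d}\,e^{-\|v\|^2/2}$, whose integral over $\R^d$ is $d\cdot p(x)$.

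The main obstacle is rigorously justifying the interchange of limit and integral in the rescaled expression. I would handle this by dominated convergence: the Taylor expansions of the embedding $\phi$ and of the induced metric $g$ near $u=0$ give, for all sufficiently small $e^\delta\|v\|$, a two-sided comparison $\|x-\phi(e^\delta v)\|^2/e^{2\delta}\asymp \|v\|^2$ together with $\sqrt{\det g(e^\delta v)}\le 2$ and $p(\phi(e^\delta v))\le 2p(x)$, yielding a dominator of the form $C\,\|v\|^2 e^{-c\|v\|^2}$ that is integrable on $\R^d$. The only parts of the argument that genuinely require differential geometry are the construction of such a chart and the local Taylor expansions of the embedding and the induced metric, which I would expect to carry out in the appendix together with the analogous step from the proof of \autoref{prop:gaussian-denominator}.
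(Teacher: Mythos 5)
Your proposal is correct and takes a genuinely different technical route from the paper, even though both ultimately reduce to the second moment of a Gaussian together with continuity of $p$ at $x$.

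On the tail: the paper bounds $e^{-2\delta}\gaussian{d}{x-x'}{0}{\delta}$ uniformly on $\M\setminus V$ by $\epsilon$ and then invokes the second moment hypothesis to integrate the leftover $\|x-x'\|^2$ against $p$. You instead absorb the $\|x-x'\|^2 e^{-2\delta}$ factor into the exponential via $z e^{-z/4}\le 4/e$, obtaining a uniform bound on the whole integrand whose decay is double-exponential in $-\delta$; this is tighter, and you correctly observe that for this proposition the finite second moment hypothesis is not actually needed (it is still used elsewhere in the proof of \autoref{th:gaussian}, e.g.\ to justify the derivative--integral interchange).

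On the local part: the paper sandwiches the chart-pulled-back density $\hat{N}(y,\delta)$ between $(1+K_V^2)^{-d/2}N(y,\delta+\delta_0)$ and $N(y,\delta)$, reduces to a genuine-Gaussian lemma (\autoref{eq:moment-int}), and then shrinks $V$ so $K_V\to 0$. You instead rescale $u=e^\delta v$ and apply dominated convergence directly to the pulled-back integrand, with the dominator supplied by a two-sided Taylor comparison $\|x-\phi(u)\|\asymp\|u\|$ near $u=0$; the $\liminf$/$\limsup$/shrink-$V$ step disappears because the rescaling makes the constants converge automatically. Both approaches require essentially the same differential-geometric input (existence of a chart with $D\phi(0)$ isometric, plus embeddedness so that $\M\cap B_D(x,\epsilon)$ lies inside one chart for small $\epsilon$), and both are valid; yours is arguably cleaner in the local step and slightly more economical in hypotheses in the tail step, while the paper's two-sided sandwich gives quantitative control that is reused verbatim from the proof of \autoref{prop:gaussian-denominator}, which keeps the two proofs visually parallel.

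One small point worth making explicit if you write this up: the lower bound $\|x-\phi(u)\|\ge c\|u\|$ for $u$ in a small ball is needed to make the dominator $\|v\|^2 e^{-c^2\|v\|^2/2}$ integrable, and it follows from $\phi(u)-\phi(0)=D\phi(0)u+o(\|u\|)$ together with $D\phi(0)$ having orthonormal columns, after shrinking the domain; this is implicit in ``the Taylor expansions \dots\ give a two-sided comparison,'' but it is the one place where the argument would break if $\M$ were merely immersed rather than embedded.
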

The expected squared norm of a centred Gaussian is well-known to be the trace of its covariance, i.e.\ $\int_{\R^d} \|y-y' \|^2 \gaussian{d}{y - y'}{0}{\delta} \rd y' = d e^{2\delta}$. Intuitively, \autoref{prop:gaussian-numerator} can be understood as stating that this property still holds as $\delta \rightarrow -\infty$ under $(i)$ and $(ii)$, with $p(x')$ again leaving the integral as $p(x)$ because the Gaussian behaves like a delta function in this regime.

Applying these two propositions to \autoref{eq:last_gauss} yields \autoref{eq:gauss_to_prove}, which in turn finishes the proof of \autoref{th:gaussian}.
\end{proof}

\begin{proof}[Proof of \autoref{cor:gaussian}]
Under the conditions of \autoref{cor:gaussian}, $\gconv$ is given by
\begin{equation}
    \gconv = \displaystyle \sum_j \pi_j \int_{\M_j} p_j(x') \gaussian{D}{x-x'}{0}{\delta} \rd x'.
\end{equation}
Following an analogous derivation to the one leading to \autoref{eq:last_gauss}, we get
\begin{equation}\label{eq:gauss_cor_log_dev}
    \lim_{\delta \rightarrow - \infty} \frac{\partial}{\partial \delta} \log \gconv  = -D + \lim_{\delta \rightarrow - \infty} \frac{\sum_j \pi_j \int_{\M_j} p_j(x') \| x - x' \|^2 e^{-2\delta}\gaussian{D}{x-x'}{0}{\delta} \rd x'}{\sum_j \pi_j \int_{\M_j} p_j(x') \gaussian{D}{x-x'}{0}{\delta} \rd x'}.
\end{equation}
Using basic calculus to maximize $\gaussian{D}{x-x'}{0}{\delta}$ and $\| x - x' \|^2 e^{-2\delta}\gaussian{D}{x-x'}{0}{\delta}$ with respect to $\| x-x' \|$ subject to $\| x-x' \| \geq \xi$ yields that, when this constraint holds,
\begin{equation}\label{eq:bound1}
    \gaussian{D}{x-x'}{0}{\delta} \leq \gconst{D} e^{-D \delta} \exp (-\tfrac{1}{2}\xi^2 e^{-2\delta}),
\end{equation}
and that if additionally $\delta \leq \tfrac{1}{2}\log \tfrac{\xi}{2}$, then
\begin{equation}\label{eq:bound2}
    \| x - x' \|^2 e^{-2\delta}\gaussian{D}{x-x'}{0}{\delta} \leq \xi^2 e^{-2\delta}\gconst{D} e^{-D \delta} \exp (-\tfrac{1}{2}\xi^2 e^{-2\delta}).
\end{equation}
The bound in \autoref{eq:bound1} decreases monotonically to $0$ as $\delta \rightarrow -\infty$, and the bound in \autoref{eq:bound2} does so as well provided that $\delta < \log \xi - \tfrac{1}{2}\log (D+2)$ (which makes the derivative of the log of the bound with respect to $\delta$ positive). It follows that $\gaussian{D}{x-x'}{0}{\delta}$ and $\| x - x' \|^2 e^{-2\delta}\gaussian{D}{x-x'}{0}{\delta}$ are both uniformly bounded over $x'$ and $\delta$ when $x' \in \M_j$ for $j \neq i$ and $\delta < \min(\tfrac{1}{2}\log \tfrac{\xi}{2}, \log \xi - \tfrac{1}{2}\log (D+2))$. The dominated convergence theorem then guarantees that, for $j\neq i$:
\begin{equation}
    \lim_{\delta \rightarrow -\infty} \int_{\M_j} p_j(x') \gaussian{D}{x-x'}{0}{\delta} \rd x' = 0 = \lim_{\delta \rightarrow -\infty} \int_{\M_j} p_j(x') \| x - x' \|^2 e^{-2\delta}\gaussian{D}{x-x'}{0}{\delta} \rd x'.
\end{equation}
Lastly, since $x \in \M_i$, \autoref{prop:gaussian-denominator} ensures that $\lim_{\delta \rightarrow -\infty} \int_{\M_i} p_i(x') \gaussian{D}{x-x'}{0}{\delta}\rd x' > 0$, so that \autoref{eq:gauss_cor_log_dev} reduces to
\begin{align}
    \lim_{\delta \rightarrow - \infty} \frac{\partial}{\partial \delta} \log \gconv  & = -D + \lim_{\delta \rightarrow - \infty} \frac{\int_{\M_i} p_i(x') \| x - x' \|^2 e^{-2\delta}\gaussian{D}{x-x'}{0}{\delta} \rd x'}{\int_{\M_i} p_i(x') \gaussian{D}{x-x'}{0}{\delta} \rd x'} \\
    & = \lim_{\delta \rightarrow - \infty} \frac{\partial}{\partial \delta} \log \int_{\M_i} p_i(x') \gaussian{D}{x-x'}{0}{\delta} \rd x' =  d_i - D,
\end{align}
where the last equality follows from \autoref{th:gaussian}. \end{proof}

\subsection{Uniform Case: \autoref{th:uniform} and \autoref{cor:uniform}}

\begin{proof}[Proof of \autoref{th:uniform} (informal)]
In analogy with \autoref{th:gaussian}, the proof of \autoref{th:uniform} hinges on the following relationship between $\uniform{D}{x}{\mu}{\delta}$ and $\uniform{d}{x}{\mu}{\delta}$:
\begin{equation}\label{eq:uniform_rel}
\uniform{D}{x}{\mu}{\delta} = \uconst{D}(\uconst{d})^{-1}e^{\delta(d-D)} \uniform{d}{x}{\mu}{\delta}.
\end{equation}
Thus we have 
\begin{equation}
\uconv = \int_\M p(x')\uniform{D}{x-x'}{0}{\delta} \rd x' = \uconst{D}(\uconst{d})^{-1}e^{\delta(d-D)} \int_\M p(x')\uniform{d}{x-x'}{0}{\delta} \rd x'.
\end{equation}
To simplify notation once again, we let
\begin{equation}
\paux{U} (x) \coloneqq \int_\M p(x')\uniform{d}{x-x'}{0}{\delta}\rd x',
\end{equation}
so that
\begin{equation}
\frac{\partial}{\partial \delta} \log \uconv = d-D + \frac{\partial}{\partial \delta} \log \paux{U}(x).
\end{equation}
Thus it suffices to show that
\begin{equation}\label{eq:uniform_to_prove}
\lim_{\delta\rightarrow -\infty}\frac{\partial}{\partial \delta} \log \paux{U}(x) = 0.
\end{equation}
Despite seeming extremely similar to the Gaussian case from \autoref{eq:gauss_to_prove}, the proof of \autoref{eq:uniform_to_prove} is not completely analogous. We now provide some intuition as to why this equation holds. Note that $\paux{U}(x) = \uconst{d}e^{-d\delta} \mathbb{P}_{X \sim p}(X \in \ball{D}{x}{e^\delta})$. Intuitively, since $p$ is continuous at $x$ it can be locally approximated by a constant, meaning that when $\delta \rightarrow -\infty$, we should expect $\mathbb{P}_{X \sim p}(X \in \ball{D}{x}{e^\delta})$ to behave like a constant times the volume (in $\M$) of $\M \cap \ball{D}{x}{e^\delta}$. Since $\M$ is $d$-dimensional, we should also expect $\M \cap \ball{D}{x}{e^\delta}$ to behave as a $d$-dimensional ball as $\delta \rightarrow -\infty$, i.e.\ to have a volume proportional to $e^{d\delta}$. Putting these intuitions together, we get that $\paux{U}(x)$ behaves like a constant as $\delta \rightarrow -\infty$, thus yielding \autoref{eq:uniform_to_prove}.

We formalize the above intuition in \autoref{app:th2} by providing a rigorous proof of \autoref{eq:uniform_to_prove} -- this result completes the proof of \autoref{th:uniform}.
\end{proof}

\begin{proof}[Proof of \autoref{cor:uniform}]
Under the assumptions of \autoref{cor:uniform}, $\uconv$ is given by
\begin{equation}
    \uconv = \displaystyle \sum_j \pi_j \int_{\M_j} p_j(x') \uniform{D}{x-x'}{0}{\delta} \rd x'.
\end{equation}
Whenever $\delta < \log \xi$, if $x' \in \M_j$ for $j \neq i$, we have that $\uniform{D}{x-x'}{0}{\delta} = 0$ since $x \in \M_i$, in which case
\begin{equation}
    \uconv = \pi_i \int_{\M_i} p_i(x') \uniform{D}{x-x'}{0}{\delta} \rd x'.
\end{equation}
By \autoref{th:uniform}, it follows that
\begin{equation}
    \lim_{\delta \rightarrow -\infty} \frac{\partial}{\partial \delta} \log \uconv =  \lim_{\delta \rightarrow -\infty} \frac{\partial}{\partial \delta} \log \int_{\M_i} p_i(x') \uniform{D}{x-x'}{0}{\delta} \rd x' = d_i - D.
\end{equation}
\end{proof}

\section{Conclusions and Future Work}
In this paper we established a formal link between LID and the rate of change of the densities of different convolutions. The Gaussian case is of particular interest as it provides formal justification for FLIPD, a state-of-the-art LID estimator. The uniform case provides an interesting connection between LID and the probability assigned to Euclidean balls, and rigorously justifies older LID estimators.

One avenue for future work would be extending our result beyond Gaussian and uniform distributions. For example, despite losing direct relevance in machine learning, finding sufficient conditions on the convolving distributions for our result to hold is an interesting mathematical problem. In particular, we hypothesize that, under adequate regularity conditions, \autoref{th:gaussian} and \autoref{th:uniform} can be generalized to cases where the noise distribution that $p$ is convolved against obeys a decomposition like those of \autoref{eq:gauss_rel} or \autoref{eq:uniform_rel}.\citet{}

Another interesting avenue for future work is extending LID estimation to flow matching methods \citep{lipman2023flow, albergo2023building, liu2023flow, tong2023improving}. These methods aim to train a vector field $\hat{v}$ such that solving the ordinary differential equation
\begin{equation}
    \rd Y_t = \hat{v}(Y_t, 1-t)\rd t, \quad Y_0 \sim \dens{1}
\end{equation}
results in $Y_1$ being distributed according to the data distribution $\dens{0}$. Flow matching methods differ in how they obtain such a $\hat{v}$. For example, when using the so called linear interpolation method \citep{lipman2023flow}, $\hat{v}$ is equivalent to an affine transformation of a score function $\hat{s}$ (see Appendix D.3 in \citet{kingma2023understanding}), which means that FLIPD can be trivially applied in this setting. However, not all flow matching methods produce a $\hat{v}$ from which one can easily recover a corresponding score function $\hat{s}$ to plug into FLIPD; for example we should expect this to be the case when using several rounds of rectified flows \citep{liu2023flow} or optimal transport regularization \citep{tong2023improving}. We believe that producing a tractable estimate of $\LID$ while given access only to a successfully trained $\hat{v}$ from any flow matching method is an interesting research problem.

Lastly, although our results indeed justify FLIPD, they do so in the setting where the score function is perfectly learned, i.e.\ $\hat{s}=s$. Despite DMs being highly performant, they never perfectly recover the true score function, and discrepancies between $\hat{s}$ and $s$ can result in
\begin{equation}
    \lim_{\delta \rightarrow -\infty} \FLIPD{\delta} = \LID
\end{equation}
being violated. Indeed, \citet{kamkari2024geometric2} reported that in some instances, FLIPD can produce negative estimates of LID. Since \citet{kamkari2024geometric2} only proved \autoref{eq:main} when $\M$ is an affine submanifold of $\R^D$, the possibility remained that this negativity was due to the result not holding more generally; a consequence of \autoref{cor:gaussian} is that negative estimates can only be caused by $\hat{s}$ having imperfectly learned $s$. We believe that quantifying how much $\FLIPD{\delta_0}$ differs from $\LID$ as a function of the error incurred by $\hat{s}$ to approximate $s$ is another relevant problem. In particular, note that if we had bounds on both the error between $s$ and $\hat{s}$, and on the error between $\nabla s$ and $\nabla \hat{s}$, as $\delta \rightarrow -\infty$, we could trivially obtain a bound on $|\nu(s, x, t(\delta)) - \nu(\hat{s}, x, t(\delta))|$ (see \autoref{eq:flipd_rate_of_change}); in turn, this would provide a bound on the LID estimation error. While we are aware of learning theory work bounding the error between $s$ and $\hat{s}$ \citep{chen2023score, oko2023diffusion, tang2024adaptivity, wang2024diffusion}, we are unaware of any existing bound on the error between $\nabla s$ and $\nabla \hat{s}$; finding such a bound is another interesting direction for future work.



\subsubsection*{Acknowledgments}
We thank Hamidreza Kamkari for useful conversations during the early stages of this research.

\bibliography{main}
\bibliographystyle{tmlr}

\newpage

\appendix

\section{Proofs for the Gaussian Case}\label{app:th1}
The following claim provides justification for taking the derivative inside the integral from \autoref{eq:preswap} to \autoref{eq:postswap}.
\begin{claim}
    Under the assumptions of \autoref{th:gaussian},
    \begin{equation}
        \frac{\partial}{\partial \delta} \int_\M p(x') \gaussian{d}{x-x'}{0}{\delta} \rd x' = \int_\M p(x') \frac{\partial}{\partial \delta} \gaussian{d}{x-x'}{0}{\delta} \rd x'.
    \end{equation}
\end{claim}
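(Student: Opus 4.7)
The plan is to justify interchanging the derivative and the integral by the standard Leibniz rule, which reduces to exhibiting a dominating integrable function for the $\delta$-derivative of the integrand on a neighbourhood of any fixed $\delta_0 \in \R$. Concretely, it suffices to show that for any $\delta_0$ there exist $\epsilon>0$ and $g \in L^1(\M, p \rd x')$ such that
\[
\left| p(x') \frac{\partial}{\partial \delta} \gaussian{d}{x-x'}{0}{\delta} \right| \le g(x') \quad \text{for all } \delta \in (\delta_0 - \epsilon, \delta_0 + \epsilon) \text{ and all } x' \in \M.
\]

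First, I would compute the derivative explicitly, using the formula already recorded in the proof of \autoref{th:gaussian}:
\[
\frac{\partial}{\partial \delta} \gaussian{d}{x-x'}{0}{\delta} = \bigl(-d + \|x-x'\|^2 e^{-2\delta}\bigr)\gaussian{d}{x-x'}{0}{\delta}.
\]
Since the Gaussian exponential factor is bounded above by $1$, the absolute value of this derivative is at most $\bigl(d + \|x-x'\|^2 e^{-2\delta}\bigr) \gconst{d}\, e^{-d\delta}$. For $\delta$ in the compact interval $[\delta_0 - \epsilon, \delta_0 + \epsilon]$, both $e^{-d\delta}$ and $e^{-2\delta}$ are uniformly bounded by constants depending only on $\delta_0$ and $\epsilon$; call them $K_1$ and $K_2$. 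This yields the candidate dominating function
\[
g(x') \coloneqq p(x')\, \gconst{d}\, K_1 \bigl(d + K_2 \|x-x'\|^2\bigr).
\]

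It remains to verify $g \in L^1$, which amounts to showing $\int_\M p(x') \|x-x'\|^2 \rd x' < \infty$. Here I would use the hypothesis of \autoref{th:gaussian} together with the fact that for an embedded submanifold, the Euclidean distance in the ambient $\R^D$ is always dominated by the induced geodesic distance, i.e.\ $\|x-x'\| \le \dist(x,x')$. Hence
\[
\int_\M p(x') \|x-x'\|^2 \rd x' \;\le\; \int_\M p(x') \dist^2(x,x') \rd x' \;=\; C \;<\; \infty
\]
by assumption, so $g$ is indeed integrable against the Riemannian measure weighted by $p$.

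With the dominating function in hand, the Leibniz rule (equivalently, dominated convergence applied to difference quotients $\frac{\gaussian{d}{x-x'}{0}{\delta+h} - \gaussian{d}{x-x'}{0}{\delta}}{h}$, which converge pointwise to the derivative and are bounded by $g$ via the mean value theorem) lets us exchange differentiation and integration, giving the claim. The only nontrivial ingredient is the comparison $\|x-x'\| \le \dist(x,x')$, which is the reason the finite second moment assumption stated in terms of geodesic distance is strong enough; everything else is routine bookkeeping of uniform bounds on a bounded interval of $\delta$.
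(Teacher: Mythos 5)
Your proposal is correct and takes essentially the same approach as the paper: compute $\frac{\partial}{\partial \delta}\gaussian{d}{x-x'}{0}{\delta}$ explicitly, bound the Gaussian factor by $\gconst{d}e^{-d\delta}$, absorb the $\delta$-dependence into constants valid on a bounded $\delta$-interval, and then invoke the finite second moment hypothesis together with $\|x-x'\| \le \dist(x,x')$ to produce an integrable dominating function for the Leibniz rule. The only cosmetic difference is that the paper uses a one-sided cutoff $\delta_a < \delta$ where you use a two-sided neighbourhood of $\delta_0$; the substance is identical.
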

\begin{proof}
    Let $\delta \in \R$, and let $\delta_a$ be such that $\delta_a < \delta$. We begin by bounding $\frac{\partial}{\partial \delta} \gaussian{d}{x-x'}{0}{\delta}$:
    \begin{align}
        \lvert \frac{\partial}{\partial \delta} \gaussian{d}{x-x'}{0}{\delta} \rvert & = \vert -d + \|x-x'\|^2e^{-2\delta} \rvert \gconst{d}e^{-d\delta}\exp \left(-\frac{1}{2}\|x - x'\|^2 e^{-2\delta}\right) \\
        & \leq (d + \|x-x'\|^2e^{-2\delta_a})\gconst{d}e^{-d \delta_a}.
    \end{align}
    This bound is integrable due to the finite second moment assumption:
    \begin{equation}
        \int_\M p(x') (d + \|x-x'\|^2e^{-2\delta_a})\gconst{d}e^{-d \delta_a} \rd x' \leq \int_\M p(x') (d + \dist(x, x')^2e^{-2\delta_a})\gconst{d}e^{-d \delta_a} \rd x' < \infty.
    \end{equation}
    The result then follows from the Leibniz integral rule.
\end{proof}

In the rest of this section we prove \autoref{prop:gaussian-denominator} and \autoref{prop:gaussian-numerator}. Note that we share notation through the rest of the section, i.e.\ some of the notation we introduce in the proof of \autoref{prop:gaussian-denominator} is also used after this proof.

\begin{proof}[Proof of \autoref{prop:gaussian-denominator}]
Recall that we want to prove that
\begin{equation}
\lim_{\delta\rightarrow -\infty} \int_\M p(x') \gaussian{d}{x-x'}{0}{\delta}\rd x' = p(x).
\end{equation}

To compute this limit, without loss of generality, we perform a translation so that $x=0$. Note that this translation preserves distance. Then perform an orthogonal transformation on $\mathbb{R}^D$ such that the tangent plane of $\M$ at $x=0$ is the $d$-subspace spanned by the first $d$ coordinate vectors. Denote these coordinates by $(y_1, \ldots, y_d)$. As $\M$ is smooth, $\M$ can be parametrized by $\{y, u(y)\}$ near $x=0$, where $y = (y_1, \ldots, y_d)$ and $u$ is a function from some open $U\subset \mathbb{R}^d$ to $\mathbb{R}^{D-d}$, such that $u(0) = 0$ and $(Du)(0) = 0$.\footnote{To avoid using non-standard notation for derivatives, we overload notation and use $D$ to refer to both the derivative operator and the ambient dimension. The meaning of $D$ will always be clear from context.} In other words, $(y_1, \ldots, y_d)$ is a local coordinate system of $\M$ on $U$, with the induced Riemannian metric $g$. Note that $g = I_d$ at $x=0$ as $(Du)(0) = 0$. This means that $U$ is diffeomorphic to an open set in $\M$ via $\phi (x) = (x, u(x))$. Shrink $U$ if necessary 
 such that $U$ is bounded, i.e. $R \coloneqq \sup_{x' \in \phi(U)} \|x'\| < \infty$. Further shrink $U$ if necessary such that $\ball{D}{0}{R}\cap \M = \phi(U)$. In other words, $x'\in\M, \|x'\|< R \Longleftrightarrow x'\in\phi(U)$.

We are going to compute the limit on some open $V\subset \phi(U)$ in $\M$ and $\M\setminus V$. Let $R_V \coloneqq \sup_{x' \in V} \|x'\|<\infty$. Then for $x'\in \M\setminus V$, for any $a\in \mathbb{R}$, we have
\begin{align}
e^{a\delta}\gaussian{d}{-x'}{0}{\delta} &= (2\pi)^{-d/2} e^{-\delta (d-a)} \exp\left(-\frac{1}{2}\|x'\|^2 e^{-2\delta}\right)\\
&\leq (2\pi)^{-d/2} e^{-\delta (d-a)} \exp\left(-\frac{1}{2}R_V^2 e^{-2\delta}\right),
\end{align}
which approaches zero as $\delta \rightarrow -\infty$. This means that for any $\epsilon > 0$, there exists $K_{V,a}$ such that $\delta < K_{V,a}$ implies $e^{a\delta}\gaussian{d}{-x'}{0}{\delta} < \epsilon$.

If $x'\in V$, in the coordinate system $(y_1, \ldots, y_d)$, we have $\gaussian{d}{-x'}{0}{\delta}$ represented by
\begin{equation}
\hat{N}(y,\delta) := (2\pi)^{-d/2} e^{-\delta d} \exp\left(-\frac{1}{2} (\|y\|^2 + \|u(y)\|^2)e^{-2\delta}\right).
\end{equation}
For simplicity, let
\begin{equation}
N(y,\delta) \coloneqq (2\pi)^{-d/2} e^{-\delta d} \exp\left(-\frac{1}{2} \|y\|^2e^{-2\delta}\right).
\end{equation}
Note that $N(y, \delta)$ is the standard multivariate gaussian distribution with covariance $e^{2\delta}I_d$. 

It is clear that $\hat{N}(y, \delta) \leq N(y, \delta)$. Now, consider the function $v(y) = \frac{\|u(y)\|}{\|y\|}$. Note that $v(y)$ is continuous everywhere in $U \setminus \{0\}$. Since $u$ and the derivatives of $u$ vanish at $y=0$, from the definition of derivative,
we have 
\[
0=\lim_{y\rightarrow 0} \frac{\| u(y) - u(0) - (Du)(0) y\|}{\|y\|} = \lim_{y\rightarrow 0} \frac{\|u(y)\|}{\|y\|} = \lim_{y\rightarrow 0} v(y).
\]
Thus $v(y)$ can be extended to a continuous function in $U$. 

Now let $K_V = \displaystyle\max_{y \in \phi^{-1}(\overline{V})} v(y)$. Then we have $\|u(y)\|^2 \leq K_V^2 \|y\|^2$. Thus
\begin{align}
\hat{N}(y,\delta) &= (2\pi)^{-d/2} e^{-\delta d} \exp\left(-\frac{1}{2} (\|y\|^2 + \|u(y)\|^2)e^{-2\delta}\right)\\
&\geq  (2\pi)^{-d/2} e^{-\delta d} \exp\left(-\frac{1}{2} \|y\|^2 (1+K_V^2)e^{-2\delta}\right)\\
&=  e^{d\delta_0}(2\pi)^{-d/2} e^{-(\delta+\delta_0) d} \exp\left(-\frac{1}{2} \|y\|^2 e^{-2(\delta+\delta_0)}\right)\\
&= (1+K_V^2)^{-d/2} N(y, \delta + \delta_0),
\end{align}
where $\delta_0 = -\frac{1}{2}\log(1+K_V^2)$.

To summarize, we have
\begin{equation}\label{hatn}
(1+K_V^2)^{-d/2} N(y, \delta + \delta_0)\leq \hat{N}(y, \delta)\leq N(y, \delta).
\end{equation}

Now, for any open $V\subset \phi(U)$ in $\M$, we have
\begin{align}
p_\delta(0) &= \int_\M p(x')\gaussian{d}{-x'}{0}{\delta} \rd x'\\
&= \int_{\M\setminus V} p(x')\gaussian{d}{-x'}{0}{\delta} \rd x' + \int_V p(x')\gaussian{d}{-x'}{0}{\delta}\rd x'.
\end{align}

For any $\epsilon>0$, whenever $\delta < K_{V, 0}$ we have,
\begin{equation}
\int_{\M\setminus V} p(x') N_d(-x';0, \delta)\rd x'\leq \int_{\M\setminus V} p(x')\epsilon \rd x' \leq \epsilon \int_\M p(x') \rd x'= \epsilon,
\end{equation}
as $p$ is a probability density on $\M$. This shows that
\begin{equation}
\lim_{\delta \rightarrow -\infty} \int_{\M\setminus V} p(x')\gaussian{d}{-x'}{0}{\delta} \rd x' = 0.
\end{equation}
Note that this is true for \textit{any} open $V\subset \phi(U)$ in $M$.

On the other hand,  in the local coordinate system $(y_1, \ldots, y_d)$, we have
\begin{equation}
\int_V p(x') \gaussian{d}{-x'}{0}{\delta} \rd x' = \int_{\phi^{-1}(V)} p(y)\sqrt{g(y)} \hat{N}(y, \delta) \rd y.
\end{equation}
Here $g(y) = \det (g_{ij}(y))$ is continuous on $\phi^{-1}(V)$ with $g(0) = 1$. From \autoref{hatn}, we have
\begin{align}
& (1+K_V^2)^{-d/2}\int_{\phi^{-1}(V)} p(y)\sqrt{g(y)}N(y, \delta + \delta_0)\rd y\\
\leq & \int_{\phi^{-1}(V)} p(y)\sqrt{g(y)} \hat{N}(y, \delta) \rd y\\
\leq & \int_{\phi^{-1}(V)} p(y)\sqrt{g(y)} N(y, \delta) \rd y.
\end{align}
Since $N(y, \delta)$ is the usual multivariate Gaussian distribution, it converges to the delta function. Thus, taking $\delta\rightarrow -\infty$, we have
\begin{equation}
(1+K_V^2)^{-d/2}p(0)\leq \liminf_{\delta\rightarrow -\infty} \int_{\phi^{-1}(V)} p(y)\sqrt{g(y)}\hat{N}(y, \delta)\rd y \leq \limsup_{\delta\rightarrow -\infty} \int_{\phi^{-1}(V)} p(y)\sqrt{g(y)}\hat{N}(y, \delta)\rd y \leq p(0),
\end{equation}
since $\sqrt{g(0)} = 1$. But we have
\begin{align}
\displaystyle\liminf_{\delta\rightarrow -\infty} \paux{N}(0) &= 
\displaystyle\liminf_{\delta\rightarrow -\infty} \int_\M p(x')\gaussian{d}{-x'}{0}{\delta} \rd x'\\
&= \displaystyle\lim_{\delta\rightarrow -\infty}\int_{\M\setminus V} p(x')\gaussian{d}{-x'}{0}{\delta} \rd x' + \liminf_{\delta\rightarrow -\infty}\int_V p(x')\gaussian{d}{-x'}{0}{\delta}\rd x'\\
&=\displaystyle\liminf_{\delta\rightarrow -\infty}\int_V p(x')\gaussian{d}{-x'}{0}{\delta}\rd x'\\
&=\displaystyle\liminf_{\delta\rightarrow -\infty} \int_{\phi^{-1}(V)} p(y)\sqrt{g(y)}\hat{N}(y, \delta)\rd y,
\end{align}
and similarly for $\limsup$. This means that
\begin{equation}
(1+K_V^2)^{-d/2}p(0)\leq \displaystyle\liminf_{\delta\rightarrow -\infty} \paux{N}(0)\leq \displaystyle\limsup_{\delta\rightarrow -\infty} \paux{N}(0) \leq p(0).
\end{equation}
Note that these inequalities hold for all $V\subset \phi(U)$. First notice that the function $f(z) \coloneqq (1 + z^2)^{-d/2}$ is continuous at $z=0$, that $f(0) = 1$, and that $f(z) \leq 1$. For any $\epsilon > 0$, these exists $\kappa > 0$ such that whenever $|z| < \kappa$, we have $1 - (1 + z^2)^{-d/2} <\epsilon$. As $v(y)$ is continuous at $y=0$ and $v(0) = 0$, there exists $\kappa' > 0$ such that whenever $\|y\| < \kappa'$, we have $v(y) < \kappa/2$. Thus let $V \coloneqq \phi(\ball{d}{0}{\kappa'} \cap U)$, we have $K_V \leq \kappa/2 < \kappa$ and thus $(1+K_V^2)^{-d/2} > 1 - \epsilon$. Thus we have 
\begin{equation}
p(0) (1 - \epsilon)\leq \liminf_{\delta \rightarrow -\infty} \paux{N}(0)\leq \limsup_{\delta \rightarrow -\infty} \paux{N}(0) \leq p(0)
\end{equation}
for every $\epsilon > 0$. This shows that 
\begin{equation}
\lim_{\delta \rightarrow -\infty} \paux{N}(0) = p(0).
\end{equation}
\end{proof}

Before proving \autoref{prop:gaussian-numerator}, we need to establish some basic tools.
\begin{claim}\label{eq:moment-out}
Let $\overline{\ball{d}{0}{r}}$ be the closed ball in $\mathbb{R}^d$ centred at 0 of radius $r$. Then 
\begin{equation}
\lim_{\delta \rightarrow -\infty}\int_{\mathbb{R}^d\setminus \overline{\ball{d}{0}{r}}} \|y\|^2 e^{-2\delta}N(y, \delta)\rd y = 0.
\end{equation}
\end{claim}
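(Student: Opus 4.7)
The plan is to remove the $\delta$-dependence from the integrand by a change of variables that converts $N(y,\delta)$ into the standard Gaussian density, after which the claim reduces to the tail of an integrable function shrinking to zero.

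First, I would substitute $z = e^{-\delta} y$, so $\rd y = e^{d\delta}\rd z$ and $\|y\|^2 = e^{2\delta}\|z\|^2$. Under this substitution,
\begin{equation}
N(y,\delta)\rd y = (2\pi)^{-d/2}e^{-\delta d}\exp\!\left(-\tfrac{1}{2}\|y\|^2 e^{-2\delta}\right) e^{d\delta}\rd z = (2\pi)^{-d/2}\exp\!\left(-\tfrac{1}{2}\|z\|^2\right)\rd z,
\end{equation}
the prefactor $\|y\|^2 e^{-2\delta}$ becomes $\|z\|^2$, and the domain $\mathbb{R}^d\setminus\overline{\ball{d}{0}{r}}$ becomes $\mathbb{R}^d\setminus\overline{\ball{d}{0}{re^{-\delta}}}$. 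Consequently,
\begin{equation}
\int_{\mathbb{R}^d\setminus\overline{\ball{d}{0}{r}}} \|y\|^2 e^{-2\delta} N(y,\delta)\rd y = \int_{\|z\|>re^{-\delta}} \|z\|^2 (2\pi)^{-d/2} e^{-\|z\|^2/2}\rd z.
\end{equation}

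Next, since $r e^{-\delta}\to\infty$ as $\delta\to-\infty$, the indicator $\mathds{1}(\|z\|>re^{-\delta})$ tends pointwise to $0$ on all of $\mathbb{R}^d$. The integrand is dominated by $\|z\|^2 (2\pi)^{-d/2} e^{-\|z\|^2/2}$, which is integrable on $\mathbb{R}^d$ (its integral over the whole space equals $d$, the trace of the identity covariance). The dominated convergence theorem then yields that the displayed integral tends to $0$, proving the claim.

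I do not anticipate a real obstacle: the only ingredient beyond the substitution is the standard fact that $\int_{\mathbb{R}^d}\|z\|^2 (2\pi)^{-d/2} e^{-\|z\|^2/2}\rd z = d$, which supplies the dominating function. Alternatively, one could prove the statement by subtracting the integral over $\overline{\ball{d}{0}{r}}$ from the known full-space integral $\int_{\mathbb{R}^d}\|y\|^2 e^{-2\delta} N(y,\delta)\rd y = d$ and invoking the companion fact (provable by the same substitution) that the integral over $\overline{\ball{d}{0}{r}}$ tends to $d$; however, the direct tail argument above is the most economical route.
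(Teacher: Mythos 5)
Your proof is correct, and it takes a genuinely different route from the paper's. The paper keeps the integration domain fixed at $\mathbb{R}^d \setminus \overline{\ball{d}{0}{r}}$ and instead studies how the integrand $\|y\|^2 e^{-2\delta}N(y,\delta)$ varies with $\delta$: it computes the derivative of the log of the integrand with respect to $\delta$, observes that this is positive whenever $\|y\|^2 e^{-2\delta} > d+2$ (which holds on all of $\|y\| > r$ once $\delta < \log r - \tfrac12\log(d+2)$), and concludes that the integrand is pointwise nonincreasing as $\delta\to-\infty$ and hence dominated by its value at a fixed $\delta_0$ (an integrable function, since it is the second-moment integrand of a Gaussian). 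Your approach instead substitutes $z = e^{-\delta}y$, which freezes the integrand at the standard Gaussian second-moment density $\|z\|^2(2\pi)^{-d/2}e^{-\|z\|^2/2}$ and pushes all $\delta$-dependence into the domain $\|z\|>re^{-\delta}$, which escapes to infinity; the indicator then tends to zero pointwise with an obvious dominating function. Both arguments ultimately invoke the dominated convergence theorem, but yours sidesteps the monotonicity computation entirely, makes the dominating function immediate, and is arguably cleaner for a reader; the paper's version has the mild advantage of never leaving the original coordinates, which keeps notation aligned with the surrounding proofs.
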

\begin{proof}
Without loss of generality, assume $\delta < \log{r} - \tfrac{1}{2}\log{(d+2)}$, then the integrand will decrease as $\delta$ decreases (the derivative of the $\log$ of the integrand with respect to $\delta$ is positive) and it converges pointwise to $0$. It follows that the integrand is bounded for negative enough $\delta$, and the result follows from the dominated convergence theorem.
\end{proof}
\begin{claim}\label{eq:moment-int}
Let $W$ be an open and bounded subset of $\mathbb{R}^d$ with $0\in W$. Let $f:W\rightarrow \mathbb{R}$ continuous at $0$ and bounded on $W$. Then we have
\begin{equation}
\lim_{\delta \rightarrow -\infty} \int_W f(y)\|y\|^2 e^{-2\delta}N(y, \delta)\rd y = f(0)d.
\end{equation}
\end{claim}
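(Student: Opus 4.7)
The plan is to reduce this to the well-known identity $\int_{\mathbb{R}^d} \|y\|^2 e^{-2\delta} N(y, \delta) \rd y = d$ (the second moment of an isotropic Gaussian with covariance $e^{2\delta} I_d$), pulling the factor $f$ out by exploiting its continuity at $0$ together with \autoref{eq:moment-out}, which controls the tail contribution.

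First I would use the openness of $W$ and continuity of $f$ at $0$. Given $\epsilon > 0$, I can pick $\kappa > 0$ so small that $\ball{d}{0}{\kappa}\subset W$ and $|f(y) - f(0)| < \epsilon$ for all $y \in \ball{d}{0}{\kappa}$. I then split the domain of integration:
\begin{equation}
\int_W f(y)\|y\|^2 e^{-2\delta} N(y,\delta)\rd y = \int_{\ball{d}{0}{\kappa}} f(y)\|y\|^2 e^{-2\delta} N(y,\delta)\rd y + \int_{W\setminus \ball{d}{0}{\kappa}} f(y)\|y\|^2 e^{-2\delta} N(y,\delta)\rd y.
\end{equation}
For the tail integral, using that $f$ is bounded by some constant $M$ on $W$ and that $W\setminus \ball{d}{0}{\kappa}\subset \mathbb{R}^d\setminus \overline{\ball{d}{0}{\kappa}}$, I get
\begin{equation}
\Bigl\lvert \int_{W\setminus \ball{d}{0}{\kappa}} f(y)\|y\|^2 e^{-2\delta} N(y,\delta)\rd y\Bigr\rvert \leq M\int_{\mathbb{R}^d\setminus \overline{\ball{d}{0}{\kappa}}} \|y\|^2 e^{-2\delta} N(y,\delta)\rd y,
\end{equation}
which tends to $0$ as $\delta \to -\infty$ by \autoref{eq:moment-out}.

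For the integral over $\ball{d}{0}{\kappa}$, I would write $f(y) = f(0) + (f(y)-f(0))$ and estimate the two resulting pieces separately. The $f(0)$ piece equals $f(0)\bigl(d - \int_{\mathbb{R}^d\setminus \overline{\ball{d}{0}{\kappa}}} \|y\|^2 e^{-2\delta} N(y,\delta)\rd y\bigr)$, which tends to $f(0)\cdot d$ again by \autoref{eq:moment-out} combined with the closed-form second moment of $N(\cdot,\delta)$. The remaining piece is bounded in absolute value by
\begin{equation}
\epsilon\int_{\ball{d}{0}{\kappa}} \|y\|^2 e^{-2\delta} N(y,\delta)\rd y \leq \epsilon\int_{\mathbb{R}^d} \|y\|^2 e^{-2\delta} N(y,\delta)\rd y = \epsilon d.
\end{equation}

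Combining these estimates gives $\limsup_{\delta\to -\infty}\bigl\lvert \int_W f(y)\|y\|^2 e^{-2\delta} N(y,\delta)\rd y - f(0)d\bigr\rvert \leq \epsilon d$, and sending $\epsilon \to 0$ finishes the proof. No step should be particularly hard; the only subtlety is ensuring that $\kappa$ is chosen small enough for $\ball{d}{0}{\kappa}$ to sit inside $W$ (so the ``near-zero'' integral genuinely reduces to an integral over a ball in $\mathbb{R}^d$), which is possible because $W$ is open and contains $0$.
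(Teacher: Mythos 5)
Your proof is correct and follows essentially the same route as the paper's: both split the domain at a small ball around the origin chosen by continuity of $f$, use \autoref{eq:moment-out} to kill the tail, and use the exact second moment $\int_{\mathbb{R}^d}\|y\|^2 e^{-2\delta} N(y,\delta)\,\rd y = d$ to identify the leading term and control the error piece. The only differences are cosmetic bookkeeping (you peel off the $f(0)$ contribution via $\int_{\ball{d}{0}{\kappa}}H = d - \int_{\mathbb{R}^d\setminus\overline{\ball{d}{0}{\kappa}}}H$, whereas the paper compares directly against $\int_{\mathbb{R}^d}f(0)H$), plus your slightly more explicit care that $\kappa$ be small enough for the ball to lie inside $W$.
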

\begin{proof}
For simplicity, let $H(y, \delta) = \|y\|^2e^{-2\delta} N(y, \delta)$. First we note that $N(y, \delta)$ is the density of multivariate Gaussian distribution with covariance matrix $e^{2\delta}I_d$. This means that
\begin{equation}
\int_{\mathbb{R}^d} \|y\|^2 N(y, \delta)\rd y = \E_{\gaussian{d}{y}{0}{\delta}}[\|y\|^2] = \Tr(e^{2\delta}I_d) = de^{2\delta}.
\end{equation}
As $f$ is continuous at 0, for any $\epsilon > 0$, there exists $r>0$ such that $\|y\|< r \Rightarrow |f(y) - f(0)| < \epsilon$. Let $K$ be an upper bound of $f$ in $W$. Using \autoref{eq:moment-out}, we can pick $K'$ such that when $\delta < K'$ we have $\int_{\mathbb{R}^d\setminus \overline{\ball{d}{0}{r}}} H(y, \delta) <\epsilon$. Then for $\delta < K'$, we have
\begin{align}
&\left|\int_W f(y) H(y, \delta) \rd y - f(0)d\right|\\
=& \left|\int_W f(y) H(y, \delta) \rd y - \int_{\mathbb{R}^d} f(0) H(y,\delta)\rd y\right|\\
=& \left|\int_{\overline{\ball{d}{0}{r}}} (f(y)-f(0)) H(y, \delta) \rd y + \int_{W\setminus \overline{\ball{d}{0}{r}}}f(y)H(y, \delta)\rd y - \int_{\mathbb{R}^d\setminus \overline{\ball{d}{0}{r}}} f(0)H(y, \delta) \rd y\right|\\
<& \epsilon\int_{\overline{\ball{d}{0}{r}}}H(y,\delta)\rd y + K\int_{W\setminus \overline{\ball{d}{0}{r}}} H(y, \delta)\rd y + |f(0)|\int_{\mathbb{R}^d\setminus \overline{\ball{d}{0}{r}}} H(y, \delta) \rd y\\
\leq & \epsilon\int_{\mathbb{R}^d}H(y,\delta)\rd y + K\int_{\mathbb{R}^d\setminus \overline{\ball{d}{0}{r}}} H(y, \delta)\rd y + |f(0)|\int_{\mathbb{R}^d\setminus \overline{\ball{d}{0}{r}}} H(y, \delta) \rd y\\
\leq & d\epsilon + K\epsilon + |f(0)|\epsilon\\
=& \epsilon (d + K + |f(0)|).
\end{align}
This proves the claim.
\end{proof}

\begin{proof}[Proof of \autoref{prop:gaussian-numerator}]
Recall that we want to prove that
\begin{equation}
    \lim_{\delta\rightarrow -\infty} e^{-2\delta}\int_\M p(x') \|x-x'\|^2 \gaussian{d}{x-x'}{0}{\delta}\rd x' = d\cdot p(x).
\end{equation}
We perform the same translation and orthogonal transformation as in the proof of \autoref{prop:gaussian-denominator}, which ensures that $x=0$ and that the tangent plane of $\M$ at $x=0$ is the $d$-subspace spanned by the first $d$ coordinate vectors. We once again consider an open $V\subset \phi(U)$ in $\M$ such that $0\in V$, and break the integral on $V$ and on $\M\setminus V$.

For any $\epsilon>0$, whenever $\delta < K_{V,-2}$ we have,
\begin{align}
\int_{\M\setminus V} p(x') \|x'\|^2 e^{-2\delta}\gaussian{d}{-x'}{0}{\delta}\rd x'&\leq \int_{\M\setminus V} p(x')\|x'\|^2\epsilon \rd x'\\
&\leq \epsilon\int_{\M\setminus V} p(x')\dist^2 (x', 0) \rd x' \\
&\leq \epsilon \int_\M p(x')\dist^2 (x', 0) \rd x'\\
&= C\epsilon.
\end{align}
Here we used the assumption that the expected squared distance from $x=0$ is finite. This shows that
\begin{equation}
\lim_{\delta \rightarrow -\infty} \int_{\M\setminus V} p(x')\|x'\|^2 e^{-2\delta}\gaussian{d}{-x'}{0}{\delta} \rd x' = 0.
\end{equation}
Note that this is true for \textit{any} open $V\subset \phi(U)$ containing $x=0$ in $\M$. On the other hand, in the local coordinate system $(y_1, \ldots, y_d)$, we have
\begin{equation}
\int_V p(x') \|x'\|^2 \gaussian{d}{-x'}{0}{\delta} \rd x' = \int_{\phi^{-1}(V)} p(y)\sqrt{g(y)} (\|y\|^2 + \|u(y)\|^2)\hat{N}(y, \delta) \rd y.
\end{equation}

By \autoref{eq:moment-int}, we have that
\begin{align}
&e^{-2\delta}\displaystyle\int_{\phi^{-1}(V)} p(y)\sqrt{g(y)} (\|y\|^2 + \|u(y)\|^2)\hat{N}(y, \delta) \rd y\\
= & e^{-2\delta}\displaystyle\int_{\phi^{-1}(V)} p(y)\sqrt{g(y)} \|y\|^2(1 + v(y)^2)\hat{N}(y, \delta) \rd y\\
\leq & e^{-2\delta}\displaystyle\int_{\phi^{-1}(V)} p(y)\sqrt{g(y)} \|y\|^2(1 + v(y)^2) N(y, \delta) \rd y\\
\rightarrow &  p(0)\sqrt{g(0)}(1 + v(0)^2)d\\
= & p(0)d
\end{align}
as $\delta\rightarrow - \infty$. Here we use the fact that $f(y):=p(y)\sqrt{g(y)}(1+v(y))$ is bounded, because $\phi^{-1}(\overline{V})$ is compact. The lower bound will be
\begin{align}
&e^{-2\delta}\displaystyle\int_{\phi^{-1}(V)} p(y)\sqrt{g(y)} (\|y\|^2 + \|u(y)\|^2)\hat{N}(y, \delta) \rd y\\
=&e^{-2\delta}\displaystyle\int_{\phi^{-1}(V)} p(y)\sqrt{g(y)} \|y\|^2(1 + v(y)^2)\hat{N}(y, \delta) \rd y\\
\geq&(1+K_V^2)^{-d/2}e^{-2\delta}\displaystyle\int_{\phi^{-1}(V)} p(y)\sqrt{g(y)} \|y\|^2(1 + v(y)^2) N(y, \delta + \delta_0) \rd y\\
\rightarrow & (1+K_V^2)^{-d/2}p(0)\sqrt{g(0)}(1 + v(0)^2)d\\
=&(1+K_V^2)^{-d/2}p(0)d
\end{align}
as $\delta\rightarrow - \infty$. As in the proof of \autoref{prop:gaussian-denominator}, we can bound the limit inferior and limit superior and shrink $V$ as we like, thus finishing the proof.
\end{proof}

\section{Proofs for the Uniform Case}\label{app:th2}

\begin{proof}[Proof of \autoref{th:uniform}]

Recall that it is enough to show that
\begin{equation}
\lim_{\delta\rightarrow -\infty}\frac{\partial}{\partial \delta} \log \paux{U}(x) = 0.
\end{equation}
We begin by performing the same rotation and translation as in \autoref{app:th1}, and considering the same coordinate vectors, parameterization of $\M$, and open set $U$. Note that we have
\begin{equation}
\paux{U}(0) = \int_\M p(x') \uniform{d}{-x'}{0}{\delta} \rd x' = \uconst{d} e^{-d\delta}\int_{\M\cap \ball{D}{0}{e^\delta}} p(x')\rd x'.
\end{equation}
Assume that $\delta$ is negative enough for $\M\cap \ball{D}{x}{e^\delta} \subset \phi(U)$, which is open in $\M$. Thus, in the coordinate system $(y_1, \ldots, y_d)$, we have 
\begin{equation}
\paux{U}(0) = \uconst{d} e^{-d\delta}\int_{V_\delta} p(y) \sqrt{g(y)} \rd y,
\end{equation}
where $V_\delta \coloneqq \phi^{-1}(\M \cap \ball{D}{x}{e^\delta}) = \{ y : \|y\|^2 + \|u(y)\|^2 < e^{2\delta}\}$, $g(y) = \det(g_{ij}(y))$ is continuous on $V_\delta$, and $g(0) = 1$. Now, we have
\begin{align}
\dfrac{\partial}{\partial \delta} \log \paux{U}(0) & =\dfrac{1}{\paux{U}(0)}\dfrac{\partial}{\partial \delta} \paux{U}(0)\\
&=\dfrac{1}{\paux{U}(0)}\left(-d\paux{U}(0) + \uconst{d} e^{-d\delta} \dfrac{\partial}{\partial \delta} \displaystyle\int_{V_\delta} p(y) \sqrt{g(y)}\rd y\right)\\
&= -d + \frac{\dfrac{\partial}{\partial \delta} \displaystyle\int_{V_\delta} p(y) \sqrt{g(y)}\rd y}{\displaystyle\int_{V_\delta} p(y) \sqrt{g(y)}\rd y}.
\end{align}
So, we must prove that
\begin{equation}\label{eq:uniform_to_prove2}
\lim_{\delta\rightarrow -\infty}\frac{\dfrac{\partial}{\partial \delta} \displaystyle\int_{V_\delta} p(y) \sqrt{g(y)}\rd y}{\displaystyle\int_{V_\delta} p(y) \sqrt{g(y)}\rd y} = d.
\end{equation}

Now, let $A_\delta \coloneqq \frac{\partial}{\partial\delta} \int_{V_\delta} \rd y$ be the derivative of $Vol(V_\delta)$ with respect to $\delta$. Since $p\sqrt{g}$ is continuous at $y=0$, we have 
\begin{align}
\dfrac{\partial}{\partial \delta} \displaystyle\int_{V_\delta} p(y) \sqrt{g(y)}\rd y
&=\displaystyle\lim_{h\rightarrow 0} \frac{1}{h}\left(\int_{V_{\delta+h}} p(y)\sqrt{g(y)}\rd y - \int_{V_\delta} p(y)\sqrt{g(y)}\rd y\right)\\
&=\displaystyle\lim_{h\rightarrow 0} \frac{1}{h}\int_{V_{\delta,h}} p(y)\sqrt{g(y)}\rd y,
\end{align}
where $V_{\delta, h} = \{y : e^{2\delta} < \|y\|^2 + \|u(y)\|^2 <e^{2(\delta+h)}\}$. In what follows we will consider the case where $h>0$; the case where $h<0$ is analogous. Note that for small enough $e^{2\delta}$ and $h$, $V_{\delta, h}$ is diffeomorphic to a $d$-dimensional annulus, as $U$ is a local chart for $\phi(U)$, and for small enough open set, the intersection of the ball in $\mathbb{R}^D$ with $\M$ is diffeomorphic to the ball in $\mathbb{R}^d$.

Now for any $\epsilon > 0$, there exists $\kappa > 0$ such that $|p(y)\sqrt{g(y)} - p(0)| < \epsilon$ whenever $\|y\| < \kappa$, since $g(0) = 1$. Choose $K'$ such that when $\delta < K'$ we have $V_{\delta, h} \subset \ball{d}{0}{\kappa}$. Note that a priori $K'$ depends on $h$. But since we are interested in the behaviour as $h\rightarrow 0$, we can choose $K'$ so that the condition holds for $h<h_0$ for some $h_0>0$. This means that for $\delta < K'$, we have 
\begin{equation}
\left|\frac{1}{h}\int_{V_{\delta, h}} (p(y)\sqrt{g(y)} - p(0))\rd y\right| \leq \frac{1}{h}\int_{V_{\delta, h}} |p(y)\sqrt{g(y)} - p(0)| \rd y < \frac{\epsilon}{h}\int_{V_{\delta, h}} \rd y.
\end{equation}
This means that
\begin{equation}
\frac{p(0) - \epsilon}{h} \int_{V_{\delta, h}} \rd y < \frac{1}{h}\int_{V_{\delta, h}} p(y)\sqrt{g(y)}\rd y < \frac{p(0) + \epsilon}{h}\int_{V_{\delta, h}} \rd y.
\end{equation}
Taking $h\rightarrow 0$, we have
\begin{equation}\label{eq1}
(p(0) - \epsilon) A_\delta \leq \dfrac{\partial}{\partial \delta} \displaystyle\int_{V_\delta} p(y) \sqrt{g(y)}\rd y \leq (p(0) + \epsilon) A_\delta.
\end{equation}

Note that a similar result as above for the same $\kappa$ and $K'$ applies to the denominator of \autoref{eq:uniform_to_prove2}, i.e.\ without the derivatives,
\begin{equation}\label{eq2}
(p(0) - \epsilon) Vol(V_\delta) <\int_{V_\delta} p(y)\sqrt{g(y)} \rd y < (p(0) + \epsilon) Vol(V_\delta).
\end{equation}
Putting \autoref{eq1} and \autoref{eq2} together, we get that, for $\delta < K'$:
\begin{equation}\label{eq3}
    \frac{(p(0)-\epsilon)A_\delta}{(p(0)+\epsilon)Vol(V_\delta)} < \frac{\dfrac{\partial}{\partial \delta} \displaystyle\int_{V_\delta} p(y) \sqrt{g(y)}\rd y}{\displaystyle\int_{V_\delta} p(y) \sqrt{g(y)}\rd y} < \frac{(p(0)+\epsilon)A_\delta}{(p(0)-\epsilon)Vol(V_\delta)}.
\end{equation}

To calculate $A_\delta$, we use the Leibniz integral rule in its most general form \citep{flanders1973differentiation},
\begin{equation}\label{eq:leibniz}
A_\delta = \frac{\partial}{\partial\delta}\int_{V_\delta}\rd y = \int_{V_\delta} \iota_{\dot{\Phi}} \rd\rd y + \int_{\partial V_\delta} \iota_{\dot{\Phi}} \rd y + \int_{V_\delta} \dot{\rd y} = \int_{\partial V_\delta} \iota_{\dot{\Phi}} \rd y,
\end{equation}
as $\rd y$ is closed and independent of $\delta$. Here $\iota_{\dot{\Phi}}$ denotes the interior product with $\dot{\Phi}$, which is the vector field of the velocity, i.e.\ if $\Phi_\delta$ is a family of diffeormorphisms from a fixed domain $V$ to $V_\delta$, then $\dot{\Phi}$ is the derivative of $\Phi_\delta$ with respect to $\delta$.

Thus, in what follows we construct a family of diffeomorphisms from a fixed domain to $V_\delta$ so that we can then apply the Leibniz integral rule. We first establish some ground work for $V_\delta$. Note that $V_\delta = \{y : G(y) < \delta\}$, where $G(y) \coloneqq \frac{1}{2} \log (\|y\|^2 + \|u(y)\|^2)$. Then we have
\begin{equation}
\frac{\partial G}{\partial y_i} = \frac{y_i + \sum_j u_j u_{j, i}}{\|y\|^2 + \|u\|^2},
\end{equation}
where we use the notation that any indices after a comma indicate a derivative, i.e.\ $u_{j,i} = \frac{\partial u_j}{\partial y_i}$. This means
\begin{equation}\label{eq:G}
\nabla G = \frac{y + (Du)u}{\|y\|^2 + \|u\|^2}.
\end{equation}
\begin{prop}\label{prop:nablag}
There exists an open set $W$ containing $0$ such that $\nabla G$ is well-defined and non-vanishing on $W\setminus \{ 0\}$.
\end{prop}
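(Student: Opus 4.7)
The plan is to take $W$ as a sufficiently small open ball around $0$ and verify both conclusions by straightforward estimates that exploit the fact that, in the chosen coordinates from the proof of \autoref{prop:gaussian-denominator}, both $u(0) = 0$ and $(Du)(0) = 0$, so that the ``normal'' part $u$ vanishes to second order at the origin while the ``tangential'' part $y$ vanishes only to first order.

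Well-definedness is essentially immediate. Since $\M$ is smooth, $u$ is smooth on $U$, so from \autoref{eq:G} the only way $\nabla G$ can fail to be defined at $y$ is if the denominator $\|y\|^2 + \|u(y)\|^2$ vanishes. For $y \neq 0$ this sum is at least $\|y\|^2 > 0$, hence $\nabla G$ is already defined on all of $U \setminus \{0\}$, and in particular on $W \setminus \{0\}$ for any open $W \subset U$.

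For non-vanishing, the plan is to dominate the correction term $(Du)(y)\,u(y)$ by a quantity strictly smaller than $\|y\|$ near the origin. The operator-norm bound gives $\|(Du)(y)\,u(y)\| \leq \|(Du)(y)\|_{op}\,\|u(y)\|$. Since $Du$ is continuous with $(Du)(0) = 0$, for any fixed $\epsilon \in (0,1)$ there is an open ball $W$ around $0$ on which $\|(Du)(y)\|_{op} < \epsilon$. Combined with $u(0) = 0$, the mean value inequality yields $\|u(y)\| \leq \epsilon\|y\|$ on $W$, and hence $\|(Du)(y)\,u(y)\| \leq \epsilon^2\|y\|$. The reverse triangle inequality then gives $\|y + (Du)(y)\,u(y)\| \geq (1 - \epsilon^2)\|y\| > 0$ for $y \in W \setminus \{0\}$, which is exactly what is needed since the denominator in \autoref{eq:G} is strictly positive on this set.

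There is no substantive obstacle here: the whole argument reduces to the elementary fact that, in these adapted coordinates, the normal component $u$ is asymptotically much smaller than the tangential component $y$ near the origin. The only care needed is to pick $W$ small enough that both $\|(Du)(y)\|_{op}$ and the ratio $\|u(y)\|/\|y\|$ are uniformly less than some $\epsilon < 1$; a single open ball of sufficiently small radius (contained in $U$) suffices for both, and any such $W$ works for the proposition.
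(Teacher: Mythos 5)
Your proof is correct, and it is genuinely more elementary than the paper's. The paper establishes Proposition~\ref{prop:nablag} as a consequence of a Hadamard-style polynomial expansion $u(y) = \sum_{i,j} A_{ij}(y)\,y_i y_j$ (\autoref{cor3}) and the resulting $C^1$-extension claim (\autoref{claim:extend}), which show that $\frac{u^\top u}{y^\top y}$, $\frac{y^\top (Du)u}{y^\top y}$, and $\frac{u^\top (Du)^\top (Du)u}{y^\top y}$ all vanish at the origin; it then factors both the numerator-squared and the denominator as $\|y\|^2$ times a quantity tending to $1$. You instead note directly that the denominator is $\geq \|y\|^2 > 0$ for $y \neq 0$ (simpler than the paper's observation, which is anyway redundant since $1 + \|u\|^2/\|y\|^2 \geq 1$ automatically), and you control the numerator by the mean-value inequality on a convex $W$ with $\|Du\|_{op} < \epsilon$ to get $\|u(y)\| \leq \epsilon\|y\|$, hence $\|(Du)u\| \leq \epsilon^2\|y\|$, and finish with the reverse triangle inequality. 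This yields the proposition with none of the algebraic machinery. The trade-off is that the paper's heavier setup is not wasted: \autoref{claim:extend} (and the polynomial expansion it rests on) is reused later in the proof of \autoref{th:uniform} to show that $H(y)$ extends to a $C^1$ function on $V_\delta$ so that $\operatorname{div}\frac{\nabla G}{\|\nabla G\|^2}$ can be evaluated at $y=0$; your argument, while shorter and cleaner for this proposition in isolation, would not replace that later work.
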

To show this proposition, first we establish the following facts:
\begin{claim}\label{claim3}
Let $f: \mathbb{R}^d \rightarrow \mathbb{R}$ be smooth such that $f(0) = 0$. Then $f(y) = \sum_i y_i g_i(y)$ for some smooth functions $g_i$.
\end{claim}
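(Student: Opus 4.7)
The plan is to use the classical Hadamard-type trick: write $f(y)$ as the integral of its derivative along the straight line from $0$ to $y$, then pull the coordinates $y_i$ out of the integral. The key identity to start from is
\begin{equation}
f(y) = f(y) - f(0) = \int_0^1 \dfrac{\rd}{\rd t} f(ty)\, \rd t,
\end{equation}
which holds because $f(0) = 0$ and $f$ is smooth (hence $C^1$), so the fundamental theorem of calculus applies to the function $t \mapsto f(ty)$ on $[0,1]$.

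Next, I would apply the chain rule to the integrand. Since $\dfrac{\rd}{\rd t} f(ty) = \sum_i y_i \dfrac{\partial f}{\partial y_i}(ty)$, interchanging the finite sum with the integral (no justification needed beyond linearity) yields
\begin{equation}
f(y) = \sum_i y_i \int_0^1 \dfrac{\partial f}{\partial y_i}(ty)\, \rd t.
\end{equation}
This exhibits the desired decomposition $f(y) = \sum_i y_i g_i(y)$ with the explicit choice
\begin{equation}
g_i(y) \coloneqq \int_0^1 \dfrac{\partial f}{\partial y_i}(ty)\, \rd t.
\end{equation}

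The only remaining point is the smoothness of each $g_i$. This follows from a standard application of differentiation under the integral sign: because $f$ is smooth, $\dfrac{\partial f}{\partial y_i}$ is smooth, and for any multi-index $\alpha$ and any compact neighbourhood $K$ of a point $y_0$, the partial derivative $\partial_y^\alpha \big[\partial_i f(ty)\big] = t^{|\alpha|}(\partial^\alpha \partial_i f)(ty)$ is continuous in $(t,y)$ and hence uniformly bounded on $[0,1] \times K$. The Leibniz rule for integrals then lets us move arbitrary $y$-derivatives inside the integral, so $g_i \in C^\infty(\R^d)$. I do not expect any real obstacle here; the main subtlety, if any, is just citing differentiation under the integral correctly to obtain $C^\infty$ regularity (not merely $C^0$), but this is entirely routine given that the integrand and all its $y$-derivatives are continuous on the compact interval $[0,1]$.
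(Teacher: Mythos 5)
Your argument is exactly the Hadamard-lemma proof the paper gives: integrate $\frac{\rd}{\rd t}f(ty)$ over $[0,1]$, apply the chain rule, and set $g_i(y) = \int_0^1 \partial_i f(ty)\,\rd t$. The only difference is that you spell out the routine differentiation-under-the-integral argument for smoothness of $g_i$, which the paper simply asserts.
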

\begin{proof}
Note that
\begin{equation}
f(y) = f(y) - f(0) = \int_0^1 \frac{\partial}{\partial t}f(ty) dt = \int_0^1 \sum_i y_i \frac{\partial f}{\partial y_i} f(ty) dt.
\end{equation}
Thus $g_i = \int_0^1 \frac{\partial f}{\partial y_i} f(ty) dt$ and $g_i$ is smooth.
\end{proof}
\begin{corollary}\label{cor3}
With $u(y)$ as defined above, we have $u(y) = \sum_{i,j}A_{ij}(y) y_iy_j$ for some smooth functions $A_{ij}:\mathbb{R}^d\rightarrow\mathbb{R}^{D-d}$.
\end{corollary}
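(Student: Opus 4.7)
The plan is to apply \autoref{claim3} twice, once to peel off a factor of $y_i$ and once more to peel off a factor of $y_j$, with the hypothesis $(Du)(0) = 0$ serving precisely to enable the second application.

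First, I would work componentwise. Write $u = (u_1, \ldots, u_{D-d})$, so each $u_k : U \subset \R^d \rightarrow \R$ is smooth with $u_k(0) = 0$. Applying \autoref{claim3} to each $u_k$ yields smooth functions $h_{k,i} : \R^d \rightarrow \R$ such that
\begin{equation}
    u_k(y) = \sum_i y_i\, h_{k,i}(y).
\end{equation}

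Next, I would exploit the assumption $(Du)(0) = 0$. Differentiating the identity above and evaluating at $y=0$ gives $\partial u_k / \partial y_j(0) = h_{k,j}(0)$, so from $(Du)(0) = 0$ we conclude $h_{k,j}(0) = 0$ for every $k,j$. This allows us to apply \autoref{claim3} a second time to each $h_{k,i}$, producing smooth functions $A_{k,i,j} : \R^d \rightarrow \R$ with $h_{k,i}(y) = \sum_j y_j\, A_{k,i,j}(y)$. Substituting back yields
\begin{equation}
    u_k(y) = \sum_{i,j} A_{k,i,j}(y)\, y_i y_j.
\end{equation}
Assembling these across $k$ by defining the smooth vector-valued functions $A_{ij}(y) \coloneqq (A_{1,i,j}(y), \ldots, A_{D-d,i,j}(y)) \in \R^{D-d}$ gives $u(y) = \sum_{i,j} A_{ij}(y)\, y_i y_j$, as required.

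There is no real obstacle here: the corollary is a direct two-step unwinding of \autoref{claim3}, and the only nontrivial input is the use of $(Du)(0)=0$ to ensure that the coefficient functions $h_{k,i}$ themselves vanish at the origin, which is exactly what lets us peel off a second factor of $y_j$.
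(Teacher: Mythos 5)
Your proof is correct and follows essentially the same route as the paper: apply \autoref{claim3} componentwise to write $u_k(y) = \sum_i y_i h_{k,i}(y)$, use $(Du)(0)=0$ to deduce $h_{k,i}(0)=0$, and apply \autoref{claim3} a second time. The only differences are notational.
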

\begin{proof}
Since $u(0) = 0$, we have $u_k:\mathbb{R}^d\rightarrow\mathbb{R}$ such that $u_k(0) = 0$. Thus, by \autoref{claim3}, $u_k(y) = \sum_j g_{kj}(y) y_j$ for some smooth $g_{kj}$. 
Since all the derivatives of $u$ vanish at $0$, for any $k, l$, we have $0 = u_{k, l}(0) = g_{kl}(0) + \sum_j g_{kj,l}(0) \cdot 0 = g_{kl}(0)$. Thus, once again by \autoref{claim3}, $g_{kl}(y) = \sum_j h_{klj}(y)y_j$ for smooth $h_{klj}$. Thus we have $u_k(y) = \sum_i g_{ki}(y) y_i = \sum_{i,j} h_{kij}(y) y_iy_j$. Thus $(A_{ij})_k = h_{kij}$, which completes the proof.
\end{proof}

\begin{claim}\label{claim:extend}
The functions $\frac{u^\top u}{y^\top y}$, $\frac{y^\top (Du)u}{y^\top y}$, and $\frac{u^\top (Du)^\top (Du)u}{y^\top y}$ can be extended to $C^1$ functions on $V_\delta$, such that each function and their first derivatives vanish at $y = 0$.
\end{claim}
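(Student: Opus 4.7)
The plan is to extend each of the three functions by the value $0$ at $y=0$ and to show the extension is $C^1$ on $V_\delta$. Away from the origin each function is smooth as a ratio of smooth functions with nonvanishing denominator, so the only issue is the point $y=0$. The key input is \autoref{cor3}, which gives the representation $u_k(y) = \sum_{i,j} A_{kij}(y)\, y_i y_j$ with smooth $A_{kij}$, implying that $u(y) = O(\|y\|^2)$ and, by differentiating this representation, that $u_{k,l}(y) = O(\|y\|)$ as $y \to 0$.

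Using these two order-of-vanishing estimates, I would first verify that each numerator is a smooth function vanishing to order at least four at the origin. The function $u^\top u = \sum_k u_k^2$ is a sum of squares of order-$2$ functions, hence of order at least $4$; the function $y^\top (Du) u = \sum_{k,l} y_l\, u_{k,l}\, u_k$ is a sum of products of factors of orders $1$, $1$, and $2$, hence of order at least $4$; and the function $u^\top (Du)^\top (Du) u = \sum_l \bigl(\sum_k u_{k,l} u_k\bigr)^2$ is a sum of squares of order-$3$ functions, hence of order at least $6$. Since $\|y\|^2$ vanishes to order exactly $2$, the claim reduces to the following general lemma: if $f: W \to \R$ is smooth on a neighborhood $W \subset \R^d$ of $0$ with $f(y) = O(\|y\|^4)$, then setting $g(y) \coloneqq f(y)/\|y\|^2$ for $y \neq 0$ and $g(0) \coloneqq 0$ defines a $C^1$ function on $W$ with $g(0) = 0$ and $\nabla g(0) = 0$.

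To prove this lemma, Taylor's theorem gives $|f(y)| \leq C\|y\|^4$ on a compact neighborhood of $0$, so $|g(y)| \leq C\|y\|^2$, establishing continuity of $g$ at $0$ and Fr\'echet differentiability there with $\nabla g(0) = 0$ (since $|g(y) - g(0)|/\|y\| \leq C\|y\| \to 0$). For $y \neq 0$ the quotient rule yields
\begin{equation}
\partial_l g(y) = \frac{f_{,l}(y)}{\|y\|^2} - \frac{2 y_l f(y)}{\|y\|^4},
\end{equation}
and the estimates $f_{,l}(y) = O(\|y\|^3)$ and $y_l f(y) = O(\|y\|^5)$ give $\partial_l g(y) \to 0 = \partial_l g(0)$ as $y \to 0$, so each $\partial_l g$ is continuous on $W$ and hence $g \in C^1(W)$. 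Applying the lemma with $f$ chosen as each of the three numerators above completes the proof.

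The only real obstacle is careful bookkeeping of the vanishing orders, since an error of one order would break the $C^1$ argument for at least the first two functions. However, \autoref{cor3} makes this routine: once $u_k$ and its first partials are written explicitly as polynomials in $y$ with smooth coefficients -- with each monomial carrying at least two, respectively one, factor of some $y_i$ -- the stated orders of vanishing for all three numerators are immediate.
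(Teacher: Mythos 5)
Your proof is correct and follows essentially the same route as the paper: use \autoref{cor3} to show each numerator vanishes to order at least four, and then establish a general lemma that $f/\|y\|^2$ extends to a $C^1$ function with vanishing first derivatives when $f$ is smooth and $O(\|y\|^4)$, by checking continuity, differentiability at $0$, and continuity of the partials via the quotient rule. The paper phrases this in terms of polynomials in $y$ with smooth-function coefficients and tracks degrees explicitly, whereas you use Taylor's theorem and Big-O orders, but the substance is identical.
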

\begin{proof}
Note that since $u$ and $\|y\|^2$ are smooth, the only point at which we have to prove the above functions are $C^1$ is at $y=0$. Using Einstein summation notation, we first notice that since $u_j = A_{jkl}y_ky_l$ from \autoref{cor3}, we have
\begin{equation}
u_{j, i} = A_{jkl,i}y_ky_l + A_{jki} y_k + A_{jil} y_l,
\end{equation}
which is a polynomial in $y_i$ of degree at least $1$.

For $u^\top u$, we have
\begin{equation}
u^\top u = u_iu_i = A_{ijk}A_{ilm}y_jy_ky_ly_m,
\end{equation}
where every term in this polynomial in $y_i$ is of degree at least $4$.

For $y^\top (Du)u$, we have
\begin{equation}
y^\top (Du)u = y_i u_{j,i} u_j = A_{jkl} y_iy_ky_l (A_{jkl,i}y_ky_l + A_{jki} y_k + A_{jil} y_l),
\end{equation}
where every term in this polynomial in $y_i$ is also of degree at least $4$.

For $u^\top (Du)^\top (Du)u$, we have
\begin{equation}
u^\top (Du)^\top (Du)u = u_{i,j}u_ju_{i,k}u_k = u_{i,j}u_{i,k}A_{jlm}y_ly_mA_{kst}y_sy_t,
\end{equation}
where again, every term in this polynomial in $y_i$ is of degree at least $4$.

Let $P(y)$ be a polynomial containing no terms of degree less than 4 (with coefficients being smooth functions on $V_\delta$), and $Q(y) \coloneqq P(y) / \|y\|^2$. It is easy to see that 
$\lim_{y\rightarrow 0} Q(y) = 0$. And thus $Q(y)$ can be extended continuously to $y=0$ with $Q(0) = 0$. As $P(y)$ contains no terms whose degree is less than $4$, we also have
$\lim_{y\rightarrow 0} \frac{|Q(y)-Q(0)|}{\|y\|} = 0$. By the definition of differentiability, we have that $Q$ is differentiable at $y=0$ and $\nabla Q(0) = 0$. Now, for $y\neq 0$, we have $\frac{\partial Q}{\partial y_i} = \frac{P_1(y)}{\|y\|^4}$, where $P_1(y)$ is a polynomial such that every term is of degree at least $5$. Thus $\frac{\partial Q}{\partial y_i}\rightarrow 0$ as $y\rightarrow 0$. This shows that $Q$ is $C^1$.
\end{proof}
\begin{proof}[Proof of Proposition \ref{prop:nablag}]
This denominator of \autoref{eq:G} is $\|y\|^2 + \|u\|^2 = \|y\|^2(1 + \frac{u^\top u}{y^\top y})$. Since $\frac{u^\top u}{y^\top y}\rightarrow 0$ as $y\rightarrow 0$, there is an open set around $y=0$ where $1 + \frac{u^\top u}{y^\top y} \neq 0$. Thus for any $y$ in this open set such that $y\neq 0$, $\nabla G$ is well-defined.

The squared-norm of the numerator of \autoref{eq:G} is 
\begin{equation}
\|y + (Du)u\|^2 = (y+(Du)u)^\top (y + (Du)u) = y^\top y \left(1 + \frac{2y^\top (Du)u}{y^\top y} + \frac{u^\top (Du)^\top (Du)u}{y^\top y}\right).
\end{equation}
Since, by \autoref{claim:extend}, both $\frac{2y^\top (Du)u}{y^\top y}$ and $\frac{u^\top (Du)^\top (Du)u}{y^\top y}$ approach $0$ as $y\rightarrow 0$, there is an open set around $y=0$ such that $\|y + (Du)u\|^2\neq 0$ in this open set when $y\neq 0$. This means that there exists an open set $W$ around $y=0$ such that $\nabla G$ is well-defined and non-vanishing on $W\setminus \{0\}$.
\end{proof}

As we are only interested in the behaviour of $A_\delta$ when $\delta\rightarrow  - \infty$, we can choose a specific $\delta'$ small enough such that $V_{\delta'} \subset W$. Now, for any $\delta < \delta'$, we have $V_{\delta} \subset V_{\delta'}$, given by the sub-level sets of $G$. Now choose $\delta_{a'}, \delta_a, \delta_b, \delta_{b'}$ such that $\delta_{a'} < \delta_a < \delta < \delta_b < \delta_{b'} < \delta'$. Now, define the vector field $X \coloneqq \psi(y)\frac{\nabla G}{\|\nabla G\|^2}$, where $\psi$ is a smooth function satisfying
\begin{equation}
\psi(y) = \left\{
\begin{array}{rcl}
1&\mbox{ if }& \delta_a \leq  G(y) \leq \delta_b\\
0 &\mbox{ if }& G(y) < \delta_{a'} \mbox{ or } G(y) > \delta_{b'}\\
\end{array}
\right.
\end{equation}
and $0 \leq \psi(y) \leq 1$. Then $X$ is a smooth vector field that vanishes outside a compact set, which implies that $X$ is a complete vector field. This means that integrating $X$ gives rise to a one-parameter group of differomophisms $\phi_t: \mathbb{R}^d\rightarrow \mathbb{R}^d$ such that $\dot{\phi}_t = X$.

Now consider the map $t\mapsto G(\phi_t(y))$. We have 
\begin{equation}
\frac{\partial}{\partial t} G(\phi_t(y)) = \nabla G \cdot\dot{\phi}_t(y) = \nabla G \cdot \left(\psi(\phi_t(y))\frac{\nabla G}{\|\nabla G\|^2}\right) = \psi(\phi_t(y)).
\end{equation}
As $\psi(\phi_t(y))$ is always between 0 and 1, the map $t\mapsto G(\phi_t(y))$ is a non-decreasing function of $t$. 
\begin{claim}\label{prop4prepre}
If $t \geq 0$, then
\begin{equation}
G(y) \leq G(\phi_t(y)) \leq G(y) + t.
\end{equation}
\end{claim}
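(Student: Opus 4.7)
The plan is to apply the fundamental theorem of calculus to the map $s \mapsto G(\phi_s(y))$ on the interval $[0, t]$. The key ingredient has already been computed immediately before the claim: we established that
\[
\frac{\partial}{\partial s} G(\phi_s(y)) = \psi(\phi_s(y)).
\]
Since $X$ is a complete smooth vector field, the flow $\phi_s$ is defined for all $s \in \mathbb{R}$, and the map $s \mapsto G(\phi_s(y))$ is continuously differentiable wherever $G$ is (which is on $W \setminus \{0\}$, and we can sidestep the origin since $G(\phi_s(y)) \to -\infty$ only if $\phi_s(y) \to 0$, but the flow stays away from $0$ on its support of motion because $X$ vanishes outside the annular region where $\delta_{a'} \le G \le \delta_{b'}$).

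Integrating the identity from $0$ to $t$ and using $\phi_0 = \mathrm{id}$ gives
\[
G(\phi_t(y)) - G(y) = \int_0^t \psi(\phi_s(y)) \, ds.
\]
Since $0 \leq \psi \leq 1$ everywhere by construction, the integrand lies in $[0, 1]$, so for $t \geq 0$ the integral lies in $[0, t]$, which yields
\[
0 \leq G(\phi_t(y)) - G(y) \leq t,
\]
i.e.\ the desired two-sided bound.

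There is no real obstacle here: the claim is essentially a direct consequence of the flow identity together with the bounds $0 \leq \psi \leq 1$. The only subtle point is ensuring that $G \circ \phi_s$ is well-defined and differentiable along the entire trajectory, which follows from the fact that $X$ is supported in the region where $\nabla G$ is well-defined and non-vanishing (Proposition~\ref{prop:nablag}), so the trajectory either stays in that region or is stationary.
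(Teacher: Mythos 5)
Your proof is correct and takes essentially the same approach as the paper: integrate the identity $\frac{d}{ds}G(\phi_s(y)) = \psi(\phi_s(y))$ over $[0,t]$ and apply $0 \le \psi \le 1$. The extra remarks on well-definedness of $G\circ\phi_s$ along the flow are sound but not needed in the paper's own argument.
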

\begin{proof}
We have
\begin{equation}
G(\phi_t(y)) - G(y) = G(\phi_t(y)) - G(\phi_0(y)) = \int_0^t \frac{d}{d\tau} G(\phi_\tau(y))\rd\tau = \int_0^t \psi(\phi_\tau(y))\rd\tau.
\end{equation}
Note that $0\leq \psi\leq 1$. Thus for $t\geq 0$, we have $0\leq G(\phi_t(y)) - G(y)\leq t$.
\end{proof}
\begin{claim}\label{prop4pre}
Let $y$ be such that $G(y)=\delta \in[\delta_a, \delta_b]$. Then for $t\in[0, \delta_b - \delta]$, we have $G(\phi_t(y)) = G(y) + t$.
\end{claim}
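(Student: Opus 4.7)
The plan is to combine the ODE identity $\tfrac{d}{dt} G(\phi_t(y)) = \psi(\phi_t(y))$ (already derived above the claim) with the bounds from \autoref{prop4prepre} to show that $\psi(\phi_t(y)) = 1$ for all $t$ in the stated range, whence integration yields the exact identity $G(\phi_t(y)) = G(y) + t$.

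More concretely, I would proceed as follows. Fix $y$ with $G(y) = \delta \in [\delta_a, \delta_b]$ and let $t \in [0, \delta_b - \delta]$. By \autoref{prop4prepre}, since $t \geq 0$, we have
\begin{equation}
G(y) \leq G(\phi_t(y)) \leq G(y) + t.
\end{equation}
The left inequality gives $G(\phi_t(y)) \geq \delta \geq \delta_a$, and the right inequality gives $G(\phi_t(y)) \leq \delta + t \leq \delta + (\delta_b - \delta) = \delta_b$. Hence $G(\phi_\tau(y)) \in [\delta_a, \delta_b]$ for every $\tau \in [0, \delta_b - \delta]$, so by the definition of $\psi$ we have $\psi(\phi_\tau(y)) = 1$ on this interval.

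Using the identity $\tfrac{d}{d\tau} G(\phi_\tau(y)) = \psi(\phi_\tau(y))$ established just before the claim, we then integrate to get
\begin{equation}
G(\phi_t(y)) - G(y) = \int_0^t \psi(\phi_\tau(y)) \, \rd \tau = \int_0^t 1 \, \rd \tau = t,
\end{equation}
which is exactly the claim.

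There is essentially no obstacle here: the only subtlety is to make sure that the upper bound from \autoref{prop4prepre} is tight enough to keep $G(\phi_\tau(y))$ inside $[\delta_a, \delta_b]$ on the \emph{entire} interval $[0, \delta_b - \delta]$ rather than merely at the endpoint, and this is immediate from the monotonicity of $\tau \mapsto G(\phi_\tau(y))$ combined with the bound $G(\phi_\tau(y)) \leq \delta + \tau$. Once one is on the plateau where $\psi \equiv 1$, the flow of $X = \nabla G / \|\nabla G\|^2$ is precisely the gradient flow that increases $G$ at unit speed, which is the geometric content of the claim.
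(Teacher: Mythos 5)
Your proof is correct and follows exactly the same route as the paper's: apply \autoref{prop4prepre} to confine $G(\phi_\tau(y))$ to $[\delta_a, \delta_b]$ over the whole interval $[0, \delta_b - \delta]$, conclude $\psi(\phi_\tau(y)) = 1$ there, and integrate the ODE identity to turn the inequality into an equality. The only difference is that you spell out the endpoint arithmetic and the monotonicity remark a bit more explicitly; the substance is identical.
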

\begin{proof}
By \autoref{prop4prepre}, if $t\in [0, \delta_b - \delta]$, we have 
$G(\phi_t(y))\in[G(y), G(y) + t] \subset [\delta_a, \delta_b]$. 
This means that $\psi(\phi_t(y)) = 1$ for $t\in[0, \delta_b - \delta]$. Thus we can replace the inequality with equality:
\begin{equation}
G(\phi_t(y)) - G(y) = \int_0^t \psi(\phi_\tau(y))\rd\tau = \int_0^t \rd\tau = t.
\end{equation}
\end{proof}

Now, for any $\delta\in [\delta_a,\delta_b]$, we define $\Phi_\delta \coloneqq \phi_{\delta - \delta_a}$. 
\begin{prop}\label{prop4}
$\Phi_\delta$ is a diffeomorphism from $V_{\delta_a}$ to $V_\delta$. 
\end{prop}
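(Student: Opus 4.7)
The plan is to observe that $\Phi_\delta = \phi_{\delta - \delta_a}$ is the time-$(\delta - \delta_a)$ map of the flow of the complete smooth vector field $X$, and is therefore automatically a diffeomorphism of $\mathbb{R}^d$ onto itself. Any restriction of a $C^\infty$ diffeomorphism of $\mathbb{R}^d$ to an open subset is automatically a diffeomorphism onto its open image, so the entire content of the proposition reduces to the set-theoretic equality $\Phi_\delta(V_{\delta_a}) = V_\delta$.

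For the forward inclusion, \autoref{prop4prepre} gives $G(\Phi_\delta(y)) \leq G(y) + (\delta - \delta_a) < \delta$ for every $y \in V_{\delta_a}$, so $\Phi_\delta(V_{\delta_a}) \subseteq V_\delta$. For the reverse inclusion, I would fix $z \in V_\delta$ and show that $w \coloneqq \phi_{\delta_a - \delta}(z)$ lies in $V_{\delta_a}$. If $G(z) < \delta_a$, the non-decreasing nature of $s \mapsto G(\phi_s(z))$ (which holds because $\psi \geq 0$) gives $G(w) \leq G(z) < \delta_a$ immediately. Otherwise $G(z) \in [\delta_a, \delta)$, and I would first establish a backward-time analogue of \autoref{prop4pre}: whenever $G(y_0) = \delta_0 \in [\delta_a, \delta_b]$, one has $G(\phi_t(y_0)) = \delta_0 + t$ for every $t \in [\delta_a - \delta_0, \delta_b - \delta_0]$. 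The proof mirrors that of \autoref{prop4pre}: a continuity/infimum argument shows that the trajectory cannot leave the slab $\{\delta_a \leq G \leq \delta_b\}$ before $G$ decreases to the value $\delta_a$, and inside the slab $\psi \equiv 1$ forces $h'(s) = 1$ identically.

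Applying this extended claim to $y_0 = z$ at $t = \delta_a - G(z) \leq 0$ produces $y_1 \coloneqq \phi_{\delta_a - G(z)}(z)$ with $G(y_1) = \delta_a$, and by the group property of the flow $w = \phi_{t_2}(y_1)$ with $t_2 \coloneqq G(z) - \delta < 0$. It then suffices to show that $h_1(s) \coloneqq G(\phi_s(y_1))$ is strictly less than $\delta_a$ for every $s < 0$. Since $h_1(0) = \delta_a$ and $h_1'(0) = \psi(y_1) = 1 > 0$, strict positivity of the derivative at $0$ yields $h_1(s) < \delta_a$ on some left neighbourhood of $0$, and the overall non-decreasing behaviour of $h_1$ then forces $h_1(s) < \delta_a$ for every $s < 0$. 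In particular $G(w) < \delta_a$, so $w \in V_{\delta_a}$ and $\Phi_\delta(w) = z$, finishing surjectivity.

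The main obstacle is the backward-time extension of \autoref{prop4pre}. The forward version relies essentially on the fact that starting inside the slab $[\delta_a, \delta_b]$ and flowing forward by a bounded amount keeps the trajectory inside the slab, where $\psi \equiv 1$ guarantees $h'(s) = 1$; a symmetric but not entirely automatic bootstrap is required for negative times. A smaller subtlety is ruling out the degenerate possibility that the backward trajectory from $y_1$ ``stalls'' on the level set $\{G = \delta_a\}$, which is excluded by the strict positivity $\psi(y_1) = 1$ that forces immediate departure from this level set at $s = 0$.
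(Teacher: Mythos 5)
Your proposal is correct, but the surjectivity argument takes a genuinely different route from the paper's. Both proofs agree that $\Phi_\delta$ is automatically a diffeomorphism of $\R^d$ (as a flow map of a complete vector field) and both reduce the statement to the set equality $\Phi_\delta(V_{\delta_a}) = V_\delta$, with the forward inclusion handled identically via \autoref{prop4prepre}. For the reverse inclusion, the paper argues by contradiction: setting $y' = \phi_{\delta_a-\delta}(y)$ and assuming $G(y') \geq \delta_a$, it notes $G(y') \leq G(y) < \delta$ by monotonicity of $t \mapsto G(\phi_t(y'))$, then invokes the \emph{forward-time} \autoref{prop4pre} together with monotonicity to get $G(y) = G(\phi_{\delta-\delta_a}(y')) \geq G(\phi_{\delta - G(y')}(y')) = \delta$, a contradiction with $y \in V_\delta$. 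You instead split into cases on $G(z)$, prove a backward-time analogue of \autoref{prop4pre}, flow $z$ back to a point $y_1$ on the level set $\{G = \delta_a\}$, and then use the strict positivity $h_1'(0) = \psi(y_1) = 1 > 0$ together with monotonicity of $h_1$ to argue $G(w) < \delta_a$. This is valid, and the backward-time version of \autoref{prop4pre} does hold by essentially the same bootstrap as the forward version (one only needs the two-sided analogue $G(y) + t \leq G(\phi_t(y)) \leq G(y)$ for $t \leq 0$, which follows from $0 \leq \psi \leq 1$ exactly as in \autoref{prop4prepre}), so your caution that it is ``not entirely automatic'' is a bit overstated. The trade-off: your direct construction is perhaps more illuminating about how the flow moves level sets of $G$, while the paper's contradiction avoids the case split, the backward-time lemma, and the derivative-sign argument entirely, relying only on the already-established forward-time \autoref{prop4pre}.
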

\begin{proof}
Let $y\in V_{\delta_a}$, i.e.\ $G(y) < \delta_a$. By virtue of \autoref{prop4prepre},  $G(\Phi_\delta(y)) = G(\phi_{\delta -\delta_a}(y)) \leq G(y) + \delta - \delta_a < \delta$. Thus $\Phi_\delta(y)\in V_\delta$. Conversely, let $y\in V_\delta$ and let $y' \coloneqq \phi_{\delta_a - \delta}(y)$, so that $\Phi_\delta(y') = y$. It suffices to show that $y'\in V_{\delta_a}$, i.e.\ $G(y')<\delta_a$. Assume the contrary, $G(y')\geq \delta_a$. Note that $G(y') \leq \delta$ because $t\mapsto G(\phi_t(y))$ is non-decreasing. Then we have
\begin{equation}
G(y) = G(\Phi_\delta(y')) = G(\phi_{\delta - \delta_a}(y'))\geq G(\phi_{\delta - G(y')}(y')) = G(y') + \delta - G(y') = \delta,
\end{equation}
which is a contradiction. Here we used that $t\mapsto G(\phi_t(y))$ is a non-decreasing function for the inequality, and \autoref{prop4pre} for the second last equality.
\end{proof}

\autoref{prop4} establishes that $\{\Phi_\delta: \delta \in [\delta_a, \delta_b]\}$ is a family of diffeomorphisms from a fixed domain $V_{\delta_a}$ to $V_\delta$. We can finally invoke the Leibniz integral rule from \autoref{eq:leibniz} with $\dot{\Phi}=\frac{\nabla G}{\|\nabla G\|^2}$ on $\partial V_\delta$. Note that
\begin{equation}
\nabla G = \frac{y + (Du)u}{\|y\|^2 + \|u\|^2},\quad\quad \|\nabla G\|^2 = \frac{\|y + (Du)u\|^2}{(\|y\|^2 + \|u\|^2)^2}, \quad\quad \frac{\nabla G}{\|\nabla G\|^2} = H(y)(y + (Du)u),
\end{equation}
where $H(y) \coloneqq \frac{\|y\|^2 + \|u\|^2}{\|y + (Du)u\|^2}$. Note that as of now, $H(y)$ is defined on $\partial V_\delta$. We would like to extend $H(y)$ to $V_\delta$. We can rewrite $H(y)$ as 
\begin{equation}
H(y) = \frac{y^\top y + u^\top u}{y^\top y + 2y^\top (Du)u + u^\top (Du)^\top (Du)u} = \frac{1 + \frac{u^\top u}{y^\top y}}{1 + \frac{y^\top (Du)u}{y^\top y} + \frac{u^\top (Du)^\top (Du)u}{y^\top y}}.
\end{equation}
Thanks to \autoref{claim:extend}, $H(y)$ can be extended to $V_\delta$ as a $C^1$ function with $H(0) = 1$. This means that $\frac{\nabla G}{\|\nabla G\|^2}$ can be extended to $V_\delta$ as a $C^1$ vector field. 

Now we apply the generalized Stokes' theorem, followed by the definition of divergence, to get
\begin{equation}
A_\delta = \int_{\partial V_\delta} \iota_{\dot{\Phi}} \rd y = 
\int_{\partial V_\delta} \iota_{\frac{\nabla G}{\|\nabla G\|^2}} \rd y = \int_{V_\delta} \rd \iota_{\frac{\nabla G}{\|\nabla G\|^2}} \rd y = \int_{V_\delta} \mbox{div } \frac{\nabla G}{\|\nabla G\|^2} \rd y.
\end{equation}

It suffices to calculate $\mbox{div } \frac{\nabla G}{\|\nabla G\|^2} = \mbox{div } H(y)(y + (Du)u)$. Using Einstein summation notation, we have
\begin{equation}
\mbox{div } H(y)(y + (Du)u) = [H(y)(y_i + u_{j,i}u_j)]_{,i} = H(y)(d + u_{j,ii}u_j + u_{j,i}u_{j,i}) + H_i(y)(y_i + u_{j,i}u_j),
\end{equation}
which is continuous on $V_\delta$. At $y=0$, we have 
\begin{equation}
\left(\mbox{div }\frac{\nabla G}{\|\nabla G\|^2}\right) (0) = H(0)(d + 0 + 0) + H_i(0)(0 + 0) = d.
\end{equation}
Thus for $\epsilon > 0$, there exists $\kappa' > 0$ such that if $\|y\| <\kappa'$, then $\left|(\mbox{div }\frac{\nabla G}{\|\nabla G\|^2})(y) - d\right| <\epsilon$. Choose $K''$ such that $V_\delta \subset \ball{d}{0}{\kappa'}$ for $\delta < K''$. Then
\begin{equation}
|A_\delta - d\cdot Vol(V_\delta)| \leq\int_{V_\delta} \left|\left(\mbox{div }\frac{\nabla G}{\|\nabla G\|^2}\right)(y) - d\right| \rd y  < \epsilon Vol(V_\delta ).
\end{equation}
This means that 
\begin{equation}\label{eq4}
(d-\epsilon) Vol(V_\delta) <A_\delta < (d + \epsilon) Vol(V_\delta).
\end{equation}
Combining \autoref{eq3} and \autoref{eq4}, we have
\begin{equation}
\frac{(p(0) -\epsilon)(d - \epsilon)}{p(0) + \epsilon} < \dfrac{\dfrac{\partial}{\partial\delta}\displaystyle\int_{V_\delta} p(y)\sqrt{g(y)} \rd y}{\displaystyle\int_{V_\delta} p(y)\sqrt{g(y)} \rd y} < \frac{(p(0) + \epsilon)(d + \epsilon)}{p(0) - \epsilon}
\end{equation}
for $\delta < K = \min(K', K'')$. Letting $\epsilon\rightarrow 0$, we have $\delta\rightarrow -\infty$, which entails \autoref{eq:uniform_to_prove2} and thus concludes the proof of \autoref{th:uniform}.
\end{proof}

\end{document}